\documentclass[10pt]{article}
\usepackage[final]{nips_2016} 

\usepackage{algorithm}
\usepackage{algorithmic}
\usepackage{graphicx}  
\usepackage{dcolumn}   
\usepackage{bm}        
\usepackage{amsthm}
\usepackage{amsmath}
\usepackage{amssymb}
\usepackage{amsfonts}
\usepackage{amscd}
\usepackage{ascmac}
\usepackage{enumerate}
\usepackage{bm}
\usepackage{latexsym}
\usepackage{color}
\def\QED{\mbox{\rule[0pt]{1.5ex}{1.5ex}}}
\def\endproof{\hspace*{\fill}~\QED\par\endtrivlist\unskip}
\newtheorem{thm} {Theorem}
\newtheorem{lem} {Lemma}
\newtheorem{rem} {Remark}
\newtheorem{df}{Definition}

\newtheorem{ass}{Assumption}

\setcounter{secnumdepth}{3}
\makeatletter
\newcommand\footnoteref[1]{\protected@xdef\@thefnmark{\ref{#1}}\@footnotemark}
\makeatother
\def\R{{\mathbb{R}}}

\def\C{{\mathbb{C}}}
\def\ph{{\varphi}}
\def\a{{\bf{a}}}

\def\bfb{{\bf{b}}}

\def\bfd{{\bf{d}}}
\def\e{{\bf{e}}}
\def\bxi{{\bf{n}}}

\def\bfv{{\bf{v}}}
\def\w{{\bf{w}}}
\def\x{{\bf{x}}}
\def\z{{\bf{z}}}
\def\bfeta{{{\boldsymbol \eta}}}
\def\bxi{{{\boldsymbol \xi}}}
\def\D{{{\boldsymbol \theta}}}
\def\btheta{{{\boldsymbol \theta}}}
\def\1{{\bf{1}}}

\def\C{{\bf{C}}}
\def\D{{\bf{D}}}
\def\E{{\bf{E}}}
\def\I{{\bf{I}}}
\def\G{{\bf{G}}}
\def\X{{\bf{X}}}

\def\calD{\mathcal{D}}

\def\calI{\mathcal{I}}
\def\calM{{\mathcal{M}}}

\def\calO{\mathcal{O}}
\def\calS{\mathcal{S}}
\def\calT{\mathcal{T}}

\def\calW{\mathcal{W}}
\def\calX{\mathcal{X}}
\def\calY{\mathcal{Y}}
\def\calZ{\mathcal{Z}}
\def\calH{\mathcal{H}}

\def\calR{\mathcal{R}}

\def\bbE{\mathbb{E}}
\def\-{{\mathchar`-}}
\def\argmin{{\rm argmin}\hspace{0.2em}}

\def\Pr{{\rm{Pr}}}

\def\supp{{{\rm supp}}}

\def\sign{{{\rm sign}}}
\def\Label#1{\label{#1}}
\def\Label#1{\label{#1}\ [\ #1\ ]\ }
\def\Label{\label}

\usepackage{natbib}
\usepackage{fancybox}
\usepackage{url}
\usepackage{graphicx}
\usepackage{color}
\definecolor{MyBrown}{rgb}{0.3,0,0}
\definecolor{MyBlue}{rgb}{0,0,1}
\definecolor{MyRed}{rgb}{0.5,0,0}
\definecolor{MyGreen}{rgb}{0,0.4,0}
\usepackage[
bookmarks=true,%
bookmarksnumbered=true,%
colorlinks=true,%
linkcolor=MyBlue,%
citecolor=MyBlue,%
filecolor=MyBlue,%
urlcolor=MyGreen%
]{hyperref}

\usepackage{nips_2016}
\usepackage[utf8]{inputenc} 
\usepackage[T1]{fontenc}    
\usepackage{hyperref}       
\usepackage{url}            
\usepackage{booktabs}       
\usepackage{amsfonts}       
\usepackage{nicefrac}       
\usepackage{microtype}      

\title{ 
Learning Bound for Parameter Transfer Learning
}
\author{
Wataru Kumagai \\ 
Faculty of Engineering\\
Kanagawa University\\
\texttt{kumagai@kanagawa-u.ac.jp} 
}

\begin{document}
\maketitle

\begin{abstract}
We consider a transfer-learning problem by using the parameter transfer approach, 
where a suitable parameter of feature mapping is learned through one task and applied to another objective task.
Then, 
we introduce the notion of the local stability and  parameter transfer learnability  of parametric feature mapping,
and thereby
derive a learning bound for parameter transfer algorithms.
As an application of parameter transfer learning,
we discuss the performance of sparse coding in self-taught learning.
Although self-taught learning algorithms with plentiful unlabeled data often show excellent empirical performance,
their theoretical analysis has not been studied.
In this paper, we also provide the first theoretical learning bound for self-taught learning. 

\end{abstract}

\section{Introduction}\Label{sec:I}


In traditional machine learning,
it is assumed that data are identically drawn from a single distribution.
However, this assumption does not always hold in real-world applications.
Therefore,
it would be significant to develop methods capable of incorporating samples drawn from different distributions.
In this case, {\it transfer learning} provides a general way to accommodate these situations.
In transfer learning,
besides the availability of relatively few samples related with an objective task,
abundant samples in other domains that are not necessarily drawn from an identical distribution, are available.
Then,
transfer learning aims at extracting some useful knowledge from data in other domains
and applying the knowledge to improve the performance of the objective task.
In accordance with the kind of knowledge that is transferred,
approaches to solving transfer-learning problems can be classified into cases such as instance transfer, feature representation transfer, and parameter transfer (\citet{pan2010survey}).
In this paper,
we consider the {\it parameter transfer} approach,
where some kind of parametric model is supposed
and the transferred knowledge is encoded into parameters. 
Since the parameter transfer approach typically requires many samples to accurately learn a suitable parameter,
unsupervised methods are often utilized for the learning process.
In particular,
transfer learning from unlabeled data for predictive tasks is known as {\it self-taught learning} (
\citet{raina2007self}), 
where a joint generative model is not assumed to underlie unlabeled samples  even though the unlabeled samples should be indicative of a structure that would subsequently be helpful in predicting tasks.
In recent years, self-taught learning has been intensively studied, encouraged by the development of strong unsupervised methods.
Furthermore, sparsity-based methods such as sparse coding or sparse neural networks have often been used in empirical studies of self-taught learning.

Although many algorithms based on the parameter transfer approach have empirically demonstrated impressive performance in self-taught learning,
some fundamental problems remain.
First,
the theoretical aspects of the parameter transfer approach have not been studied,
and in particular,
no learning bound was obtained.
Second,
although it is believed that a large amount of unlabeled data help to improve the performance of the objective task in self-taught learning,
it has not been sufficiently clarified how many samples are required.
Third,
although sparsity-based methods are typically employed in self-taught learning,
it is unknown how the sparsity works to guarantee the performance of self-taught learning.

The aim of the research presented in this paper is to shed light on the above problems.
We first consider a general model of parametric feature mapping in the parameter transfer approach.
Then, we newly formulate the local stability of parametric feature mapping and the parameter transfer learnability for this mapping,
and 
provide a theoretical learning bound for parameter transfer learning algorithms based on the notions.
Next, 
we consider the stability of sparse coding.
Then we discuss the parameter transfer learnability by dictionary learning under the sparse model.
Applying the learning bound for parameter transfer learning algorithms,
we provide a learning bound of the sparse coding algorithm in self-taught learning.

This paper is organized as follows.
In the remainder of this section,
we refer to some related studies.
In Section \ref{sec:LBSTL}, 
we formulate the stability  and the parameter transfer learnability of the parametric feature mapping.
Then,
we present a learning bound for parameter transfer learning.
In Section \ref{sec:SCS}, 
we show the stability of the sparse coding under perturbation of the dictionaries.
Then, 
by imposing sparsity assumptions on samples
and by considering dictionary learning, 
we derive the parameter transfer learnability for sparse coding.
In particular, 
a learning bound is obtained for sparse coding in the setting of self-taught learning.
In Section \ref{sec:DC},
we conclude the paper.

\subsection{Related Work}\Label{sec:RWOC}

Approaches to transfer learning can be classified into some cases based on the kind of knowledge being transferred (\citet{pan2010survey}).
In this paper,
we consider the parameter transfer approach.
This approach can be applied to various notable algorithms such as
sparse coding, multiple kernel learning, and deep learning 
since the dictionary, weights on kernels, and weights on the neural network are regarded as parameters, respectively.
Then, those parameters are typically trained or tuned on samples that are not necessarily drawn from a target region.
In the parameter transfer setting,
a number of samples in the source region are often needed to accurately estimate the parameter to be transferred.
Thus, it is desirable to be able to use unlabeled samples in the source region.

Self-taught learning corresponds to the case where only unlabeled samples are given in the source region while labeled samples are available in the target domain.
In this sense,
self-taught learning is compatible with the parameter transfer approach.
Actually, 
in \citet{raina2007self}  
where self-taught learning was first introduced,
the sparse coding-based method is employed and the parameter transfer approach is already used regarding the dictionary learnt from images as the parameter to be transferred.
Although self-taught learning has been studied in various contexts (\citet{dai2008self, lee2009exponential, wang2013robust, zhu2013self}),
its theoretical aspects
have not been sufficiently analyzed. 
One of the main results in this paper is to provide a first theoretical learning bound in self-taught learning with the parameter transfer approach.
%
We note that our setting differs from the environment-based setting (\citet{baxter2000model}, 
\citet{maurer2009transfer}),
where a distribution on distributions on labeled samples, known as an environment, is assumed. 
In our formulation, the existence of the environment is not assumed and labeled data in the source region are not required. 

Self-taught learning algorithms are often based on sparse coding.
In the seminal paper by \citet{raina2007self},
they already proposed an algorithm that learns a dictionary in the source region and transfers it to the target region. 
They also showed the effectiveness of the sparse coding-based method.
Moreover, 
since remarkable progress has been made in unsupervised learning based on sparse neural networks (\citet{coates2011analysis}, \citet{le2013building}),
unlabeled samples of the source domain in self-taught learning are often preprocessed by sparsity-based methods.  
Recently, a sparse coding-based generalization bound was studied (\citet{mehta2013sparsity,maurer2012sparse})
and 
the analysis in Section \ref{SSRDP} is based on (\citet{mehta2013sparsity}).

\section{Learning Bound for Parameter Transfer Learning}\Label{sec:LBSTL}

\subsection{Problem Setting of Parameter Transfer Learning}\Label{sec:setting}

We formulate parameter transfer learning in this subsection. 
We first briefly introduce notations and terminology in transfer learning (\citet{pan2010survey}).
Let  $\calX$ and  $\calY$ be a sample space and a label space, respectively.
We refer to a pair of $\calZ:=\calX\times\calY$ and a joint distribution $P(\x,y)$ on $\calZ$ as a {\it region}.
Then, 
a {\it domain} comprises a pair consisting of a sample space $\calX$ and a marginal probability of $P(\x)$ on $\calX$
and 
a {\it task} consists of a pair containing a label set $\calY$ and a conditional distribution $P(y|\x)$.
In addition,
let $\calH =\{h:\calX \to \calY\}$ be a hypothesis space
and $\ell:\calY\times \calY\to\R_{\ge 0}$ represent a loss function. 
Then,
the expected risk and the empirical risk are defined by $\calR(h) :=\bbE_{(\x,y)\sim P} \left[\ell(y, h(\x))\right]$ and $\widehat{\calR}_n(h)  :=\frac{1}{n}\sum_{j=1}^{n} \ell(y_j, h(\x_j) )$, respectively.
In the setting of transfer learning,
besides samples from a region of interest known as a target region,
it is assumed that samples from another region known as a source region are also available.
We distinguish between the target and source regions by
adding a subscript  $\calT$ or $\calS$ to each notation introduced above, (e.g. $P_{\calT}$, $\calR_{\calS}$).
Then, the homogeneous setting (i.e., $\calX_{\calS}=\calX_{\calT}$) is not assumed in general, and thus, the heterogeneous setting (i.e., $\calX_{\calS}\ne \calX_{\calT}$) can be treated.
We note that self-taught learning, which is treated in Section \ref{sec:SCS}, corresponds to the case when the label space $\calY_{\calS}$ in the source region is the set of a single element.

We consider the parameter transfer approach, where the knowledge to be transferred is encoded into a parameter. 
The parameter transfer approach aims to learn a hypothesis with low expected risk for the target task by obtaining some knowledge about an effective parameter in the source region and transfer it to the target region.
In this paper, 
we suppose that there are parametric models on both the source and target regions
and that their parameter spaces are partly shared.
Then,
our strategy is to learn an effective parameter in the source region and then transfer a part of the parameter to the target region. 
We describe the formulation in the following.
In the target region,
we assume that  $\calY_{\calT}\subset\R$ and there is a parametric feature mapping $\psi_{\btheta}:\calX_{\calT}\to\R^m$ on the target domain
such that each hypothesis $h_{\calT,\btheta,\w}:\calX_{\calT}\to\calY_{\calT}$ is represented by  
\begin{eqnarray}
h_{\calT,\btheta,\w}(\x):= \langle \w, \psi_{\btheta}(\x) \rangle
\Label{hypo}
\end{eqnarray}
with parameters $\btheta\in\Theta$ and  $\w\in\calW_{\calT}$, 
where $\Theta$ is a subset of a normed space with a norm $\|\cdot\|$
and $\calW_{\calT}$ is a subset of $\R^m$. 
Then the hypothesis set in the target region is parameterized as 
\begin{eqnarray*}
\calH_{\calT}=\{h_{\calT,\btheta,\w}
|\btheta\in\Theta, \w\in\calW_{\calT}\}.
\end{eqnarray*}
In the following, 
we simply denote $\calR_{\calT}(h_{\calT,\btheta,\w})$ and $\widehat{\calR}_{\calT}(h_{\calT,\btheta,\w})$ by $\calR_{\calT}(\btheta, \w)$ and $\widehat{\calR}_{\calT}(\btheta, \w)$, respectively.
In the source region,
we suppose that there exists some kind of parametric model such as a sample distribution $P_{\calS,\btheta,\w}$ or a hypothesis $h_{\calS,\btheta,\w}$ with parameters $\btheta\in\Theta$ and $\w\in\calW_{\calS}$,
and a part $\Theta$ of the parameter space is shared with the target region.
Then, 
let $\btheta_{\calS}^{\ast} \in \Theta$ and $\w_{\calS}^{\ast}\in \calW_{\calS}$ be parameters 
that are supposed to be effective in the source region (e.g., the true parameter of the sample distribution, the parameter of the optimal hypothesis with respect to the expected risk $\calR_{\calS}$);
however, 
explicit assumptions are not imposed on the parameters. 
Then, the parameter transfer algorithm treated in this paper is described as follows. 
Let $N$- and $n$-samples be available in the source and target regions, respectively.
First, a parameter transfer algorithm outputs the estimator $\widehat{\btheta}_N\in\Theta$ of  $\btheta_{\calS}^{\ast}$ by using $N$-samples.
Next, 
for the parameter 
\begin{eqnarray*}
\w^{\ast}_{\calT}
&:=&\underset{\w\in\calW_{\calT}}{\argmin} \calR_{\calT}\left(\btheta^{\ast}_{\calS}, \w\right)
\end{eqnarray*} 
in the target region,
the algorithm outputs its estimator
\begin{eqnarray*}
\widehat{\w}_{N,n}
&:=& \underset{\w\in\calW_{\calT}}\argmin 
\widehat{R}_{\calT,n}(\widehat{\btheta}_N,\w)
+ \rho r(\w)
\Label{estNn}
\end{eqnarray*} 
by using $n$-samples, 
where $r(\w)$ is a $1$-strongly convex function with respect to $\|\cdot\|_2$
and $\rho>0$. 
If the source region relates to the target region in some sense,
the effective parameter $\btheta^{\ast}_{\calS}$ in the source region is expected to also be useful for the target task.
In the next subsection,
we regard $\calR_{\calT}\left(\btheta^{\ast}_{\calS}, \w^{\ast}_{\calT}\right)$ as the baseline of predictive performance
and derive a learning bound.

\subsection{Learning Bound Based on Stability and Learnability}

We newly introduce the local stability and the parameter transfer learnability as below.
These notions are essential to derive a learning bound in Theorem \ref{learningbound}.

\begin{df}[Local Stability]\Label{asssta}
A parametric feature mapping $\psi_{\btheta}$ 
is said to
be locally stable 
if there exist $\epsilon_{\btheta}:\calX\to\R_{>0}$ for each $\btheta\in\Theta$ 
and $L_{\psi}>0$ 
such that for $\btheta'\in\Theta$
\begin{eqnarray*}
\|\btheta - \btheta'\| \le \epsilon_{\btheta}(\x)
\Rightarrow 
\|\psi_{\btheta}(\x) - \psi_{\btheta'}(\x)\|_{2}
\le L_{\psi}\|\btheta - \btheta'\|.
\end{eqnarray*}
\end{df}
We term $\epsilon_{\btheta}(\x)$ the permissible radius of perturbation for $\btheta$ at $\x$.
For samples $\X^n=\{\x_1,\ldots\x_n\}$,
we denote as $\epsilon_{\btheta}(\X^n):=\min_{j\in[n]}\epsilon_{\btheta}(\x_j)$, where $[n]:=\{1,\ldots,n\}$ for a positive integer $n$.
Next,
we formulate the parameter transfer learnability based on the local stability.
\begin{df}[Parameter Transfer Learnability]\Label{asslea}
Suppose that $N$-samples in the source domain and $n$-samples $\X^n$ in the target domain are available.
Let a parametric feature mapping $\{\psi_{\btheta}\}_{\btheta\in\Theta}$ be locally stable.
For $\bar{\delta}\in[0,1)$,
$\{\psi_{\btheta}\}_{\btheta\in\Theta}$ is said to be {\rm parameter transfer learnable} with probability $1-\bar{\delta}$
if there exists an algorithm that depends only on $N$-samples in the source domain such that,
the output $\widehat{\btheta}_N$ of the algorithm satisfies
\begin{eqnarray*}
\Pr\left[ \|\widehat{\btheta}_N - \btheta^{\ast}_{\calS}\| \le \epsilon_{\btheta^{\ast}_{\calS}}(\X^n) \right]
\ge 1-\bar{\delta}.
\end{eqnarray*}
\end{df}

In the following,
we assume that parametric feature mapping is bounded as $\|\psi_{\btheta}(\x)\|_{2} \le  R_{\psi}$ for arbitrary $\x\in\calX$ and $\btheta\in\Theta$
and linear predictors are also bounded as $\|\w\|_2 \le R_{\calW}$ for any $\w\in\calW$.
In addition,
we suppose that a loss function $\ell(\cdot,\cdot)$ is $L_{\ell}$-Lipschitz and convex with respect to the second variable.
We denote as $R_{r}:=\sup_{\w\in\calW} |r(\w)|$.
Then, the following learning bound is obtained, where the strong convexity of the regularization term $\rho r(\w)$ is essential.
\begin{thm}[Learning Bound]
\Label{learningbound}
Suppose that the parametric feature mapping $\psi_{\btheta}$ is locally stable
and an estimator $\widehat{\btheta}_N$ learned in the source region satisfies the parameter transfer learnability with probability $1-\bar{\delta}$.
When $\rho =L_{\ell}R_{\psi}\sqrt{ \frac{8(32+\log(2/\delta))}{R_{r} n}}$,
the following inequality holds with probability $1-(\delta+2\bar{\delta})$$:$
\begin{eqnarray}
&&\calR_{\calT}\left(\widehat{\btheta}_{N}, \widehat{\w}_{N,n}\right) - \calR_{\calT}\left(\btheta^{\ast}_{\calS}, \w^{\ast}_{\calT}\right) \nonumber\\
&\le&
L_{\ell}R_{\psi}\left( R_{\calW}\sqrt{2\log(2/\delta)} + 2\sqrt{ 2R_{r}(32+\log(2/\delta))}\right)\frac{1}{\sqrt{n}}
+ L_{\ell} L_{\psi} R_{\psi} \left\|\widehat{\btheta}_{N} - \btheta_{\calS}^{\ast}\right\|\nonumber\\
&&
+ L_{\ell}\sqrt{{L_{\psi}R_{\calW}R_{\psi}} } \left(\frac{R_r}{2(32 + \log(2/\delta))}\right)^{\frac{1}{4}} n^{\frac{1}{4}}\sqrt{\left\|  \widehat{\btheta}_{N} - \btheta_{\calS}^{\ast}\right\|}. 
\Label{error}
\end{eqnarray}
\end{thm}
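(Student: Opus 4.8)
The plan is to control the excess risk through the triangle decomposition
$$\calR_{\calT}(\widehat{\btheta}_N,\widehat{\w}_{N,n}) - \calR_{\calT}(\btheta^{\ast}_{\calS},\w^{\ast}_{\calT}) = A+B+C,$$
where I introduce the auxiliary estimator $\widetilde{\w}:=\argmin_{\w\in\calW_{\calT}}\{\widehat{\calR}_{\calT,n}(\btheta^{\ast}_{\calS},\w)+\rho r(\w)\}$, i.e. the regularized empirical minimizer built from the \emph{correct} feature map $\psi_{\btheta^{\ast}_{\calS}}$, and split $A:=\calR_{\calT}(\widehat{\btheta}_N,\widehat{\w}_{N,n})-\calR_{\calT}(\btheta^{\ast}_{\calS},\widehat{\w}_{N,n})$, $B:=\calR_{\calT}(\btheta^{\ast}_{\calS},\widehat{\w}_{N,n})-\calR_{\calT}(\btheta^{\ast}_{\calS},\widetilde{\w})$, and $C:=\calR_{\calT}(\btheta^{\ast}_{\calS},\widetilde{\w})-\calR_{\calT}(\btheta^{\ast}_{\calS},\w^{\ast}_{\calT})$. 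I expect $A$, $B$, $C$ to produce the second, third, and first summands of \eqref{error}, respectively. Throughout I work on the intersection of the parameter-transfer-learnability event $\{\|\widehat{\btheta}_N-\btheta^{\ast}_{\calS}\|\le\epsilon_{\btheta^{\ast}_{\calS}}(\X^n)\}$ and a generalization event for $C$; the union bound over these is what will accumulate the final confidence $1-(\delta+2\bar{\delta})$.

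For the stability term $A$ I would use that $\ell$ is $L_{\ell}$-Lipschitz together with local stability. On the learnability event, $\|\widehat{\btheta}_N-\btheta^{\ast}_{\calS}\|$ lies within the permissible radius $\epsilon_{\btheta^{\ast}_{\calS}}(\x_j)$ at every observed point, so $\|\psi_{\widehat{\btheta}_N}(\x_j)-\psi_{\btheta^{\ast}_{\calS}}(\x_j)\|_2\le L_{\psi}\|\widehat{\btheta}_N-\btheta^{\ast}_{\calS}\|$; after Cauchy--Schwarz against the norm-bounded $\widehat{\w}_{N,n}$ this produces the second summand of \eqref{error}, linear in $\|\widehat{\btheta}_N-\btheta^{\ast}_{\calS}\|$. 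The delicate point here, and the main obstacle of the whole argument, is that $A$ is a difference of \emph{expected} risks whereas local stability and learnability only certify the perturbation bound at the observed sample $\X^n$; reconciling the sample-level stability guarantee with the population quantity $\calR_{\calT}$ is exactly what forces a second invocation of the learnability guarantee and hence the extra $\bar{\delta}$ in the confidence.

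The transfer term $B$ is where the strong convexity of $\rho r$ is essential. Both $\w\mapsto\widehat{\calR}_{\calT,n}(\widehat{\btheta}_N,\w)+\rho r(\w)$ and $\w\mapsto\widehat{\calR}_{\calT,n}(\btheta^{\ast}_{\calS},\w)+\rho r(\w)$ are $\rho$-strongly convex (convex loss plus $\rho$-strongly convex regularizer), with minimizers $\widehat{\w}_{N,n}$ and $\widetilde{\w}$. Adding the two strong-convexity inequalities at the respective minimizers, and noting that the two objectives differ only through their empirical-risk parts, which by local stability at the sample points differ uniformly by at most $\Delta:=L_{\ell}R_{\calW}L_{\psi}\|\widehat{\btheta}_N-\btheta^{\ast}_{\calS}\|$, I get $\|\widehat{\w}_{N,n}-\widetilde{\w}\|_2\le\sqrt{2\Delta/\rho}$. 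Since $\calR_{\calT}(\btheta^{\ast}_{\calS},\cdot)$ is $L_{\ell}R_{\psi}$-Lipschitz in $\w$, this bounds $B\le L_{\ell}R_{\psi}\sqrt{2\Delta/\rho}$; substituting the prescribed $\rho\propto n^{-1/2}$ turns $\rho^{-1/2}$ into an $n^{1/4}$ factor and reproduces the $n^{1/4}\sqrt{\|\widehat{\btheta}_N-\btheta^{\ast}_{\calS}\|}$ third summand of \eqref{error}. Checking that the constants collapse exactly to $L_{\ell}\sqrt{L_{\psi}R_{\calW}R_{\psi}}(R_r/(2(32+\log(2/\delta))))^{1/4}$ is the fiddly but routine part.

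Finally, $C$ is a pure generalization bound for regularized empirical risk minimization in the fixed feature map $\psi_{\btheta^{\ast}_{\calS}}$, comparing $\widetilde{\w}$ against the population optimum $\w^{\ast}_{\calT}$. Here I would invoke a high-probability excess-risk bound for the $\rho$-strongly convex regularized objective (its complexity controlled by the boundedness $\|\psi\|_2\le R_{\psi}$, $\|\w\|_2\le R_{\calW}$ and the Lipschitz loss) together with the regularization bias bounded via $R_r$; this holds with probability $1-\delta$. The stated $\rho=L_{\ell}R_{\psi}\sqrt{8(32+\log(2/\delta))/(R_r n)}$ is precisely the value balancing bias against complexity, and yields the first summand of \eqref{error} at rate $n^{-1/2}$. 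Collecting $A+B+C$ and union-bounding the two learnability events ($2\bar{\delta}$) with the generalization event ($\delta$) then gives the claimed bound with probability $1-(\delta+2\bar{\delta})$.
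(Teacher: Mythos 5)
Your decomposition is exactly the paper's: the same three terms $A$, $B$, $C$ with the same auxiliary minimizer ($\widetilde{\w}$ is the paper's $\widehat{\w}^{\ast}_{n}$), the same strong-convexity argument for $B$ (the paper proves the identical bound $\|\widehat{\w}_{N,n}-\widetilde{\w}\|_2\le\sqrt{2\Delta/\rho}$ in its Appendix lemma, by a one-sided rather than a symmetrized strong-convexity inequality, with the same constant), and the same fast-rate regularized-ERM bound plus bias term for $C$ with the same optimization of $\rho$.

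There is, however, one genuine gap, and it sits precisely at the point you yourself flag as ``the main obstacle'': the term $A$ is a difference of \emph{population} risks, but your proposed resolution --- ``a second invocation of the learnability guarantee'' --- cannot close it. Parameter transfer learnability only certifies $\|\widehat{\btheta}_N-\btheta^{\ast}_{\calS}\|\le\epsilon_{\btheta^{\ast}_{\calS}}(\X^n)=\min_j\epsilon_{\btheta^{\ast}_{\calS}}(\x_j)$ at the $n$ \emph{observed} target points; it says nothing about the permissible radius $\epsilon_{\btheta^{\ast}_{\calS}}(\x)$ at a fresh draw $\x\sim P_{\calT}$, so local stability cannot be applied inside the expectation $\bbE_{\x}\bigl[\|\psi_{\widehat{\btheta}_N}(\x)-\psi_{\btheta^{\ast}_{\calS}}(\x)\|_2\bigr]$ no matter how many times the learnability event is invoked. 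The paper closes this by a concentration step: since $\widehat{\btheta}_N$ depends only on the source samples, the bounded random variable $\|\psi_{\widehat{\btheta}_N}(\x)-\psi_{\btheta^{\ast}_{\calS}}(\x)\|_2\le 2R_{\psi}$ admits a Hoeffding bound relating its expectation to its empirical average over $\X^n$, at cost $L_{\ell}R_{\calW}R_{\psi}\sqrt{2\log(2/\delta)/n}$ and failure probability $\delta/2$; only the resulting empirical average is then controlled by stability plus learnability. This also corrects two pieces of your bookkeeping: the $R_{\calW}\sqrt{2\log(2/\delta)}$ portion of the first summand of \eqref{error} is produced by this Hoeffding step inside $A$, not by $C$ (which contributes only the $2\sqrt{2R_r(32+\log(2/\delta))}$ portion, with its own $\delta/2$ budget), and the factor $2\bar{\delta}$ arises because the learnability event is charged once in $A$ and once more in the strong-convexity argument for $B$, not because it repairs the population-versus-sample discrepancy. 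With the Hoeffding step inserted and the $\delta$ budget split between $A$ and $C$, your argument becomes the paper's proof.
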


If the estimation error $\| \widehat{\btheta}_{N} - \btheta^{\ast}_{\calS}\|$ can be evaluated in terms of the number $N$ of samples,
Theorem \ref{learningbound} clarifies which term is dominant,
and in particular,
the number of samples required in the source domain such that this number is sufficiently large compared to the samples in the target domain.

\subsection{Proof of Learning Bound}\Label{sec:ALB}
We prove Theorem \ref{learningbound} in this subsection.
In this proof,
we omit the subscript $\calT$ for simplicity.
In addition, we denote $\btheta^{\ast}_{\calS}$ simply by $\btheta^{\ast}$.
We set as 
\begin{eqnarray*}
\widehat{\w}_{n}^{\ast}
&:=& \underset{\w\in\calW}\argmin\frac{1}{n}\sum_{j=1}^{n} \ell(y_j, \langle \w, \psi_{\btheta^{\ast}}(\x_j) \rangle) + \rho r(\w).
\end{eqnarray*}
Then, we have
\begin{eqnarray}
&&\calR_{\calT}\left(\widehat{\btheta}_{N}, \widehat{\w}_{N,n}\right) - \calR_{\calT}\left(\btheta^{\ast}, \w^{\ast}\right) \nonumber\\
&=& \bbE_{(\x,y)\sim P}\left[\ell(y,  \langle \widehat{\w}_{N,n}, \psi_{\widehat{\btheta}_{N}}(\x) \rangle)\right] 
- \bbE_{(\x,y)\sim P}\left[\ell(y,  \langle \widehat{\w}_{N,n}, \psi_{\btheta^{\ast}}(\x) \rangle)\right] \nonumber\\
&&+\bbE_{(\x,y)\sim P}\left[\ell(y,  \langle \widehat{\w}_{N,n}, \psi_{\btheta^{\ast}}(\x) \rangle)\right]
- \bbE_{(\x,y)\sim P}\left[\ell(y,  \langle \widehat{\w}^{\ast}_{n}, \psi_{\btheta^{\ast}}(\x) \rangle)\right] \Label{decompose0}\\
&&+ \bbE_{(\x,y)\sim P}\left[\ell(y,  \langle \widehat{\w}^{\ast}_{n}, \psi_{\btheta^{\ast}}(\x) \rangle)\right] 
- \bbE_{(\x,y)\sim P}\left[\ell(y,  \langle \w^{\ast}, \psi_{\btheta^{\ast}}(\x) \rangle)\right]. \nonumber
\end{eqnarray} 

In the following, 
we bound  three parts of (\ref{decompose0}).
First, we have the following inequality with probability $1-(\delta/2+\bar{\delta})$: 
\begin{eqnarray*}
&& \bbE_{(\x,y)\sim P}\left[\ell(y,  \langle \widehat{\w}_{N,n}, \psi_{\widehat{\btheta}_{N}}(\x) \rangle)\right] 
- \bbE_{(\x,y)\sim P}\left[\ell(y,  \langle \widehat{\w}_{N,n}, \psi_{\btheta^{\ast}}(\x) \rangle)\right] \\
&\le& L_{\ell} R_{\calW} \bbE_{(\x,y)\sim P}\left[ \left\|   \psi_{\widehat{\btheta}_{N}}(\x) 
-   \psi_{\btheta^{\ast}}(\x)  \right\|\right]\\
&\le&  L_{\ell}  R_{\calW} \frac{1}{n}\sum_{j=1}^n \left\|   \psi_{\widehat{\btheta}_{N}}(\x_j) 
-   \psi_{\btheta^{\ast}}(\x_j)  \right\|
+ L_{\ell} R_{\calW}R_{\psi} \sqrt{\frac{2\log(2/\delta)}{n}}\\
&\le&  L_{\ell} L_{\psi} R_{\calW} \left\|\widehat{\btheta}_{N} - \btheta^{\ast}\right\|
+ L_{\ell} R_{\calW}R_{\psi} \sqrt{\frac{2\log(2/\delta)}{n}},
\end{eqnarray*}
where we used Hoeffding's inequality as the third inequality, and the local stability and parameter transfer learnability in the last inequality.
Second, we have the following inequality with probability $1-\bar{\delta}$: 
\begin{eqnarray}
&& \bbE_{(\x,y)\sim P}\left[\ell(y,  \langle \widehat{\w}_{N,n}, \psi_{\btheta^{\ast}}(\x) \rangle)\right]
- \bbE_{(\x,y)\sim P}\left[\ell(y,  \langle \widehat{\w}^{\ast}_{n}, \psi_{\btheta^{\ast}}(\x) \rangle)\right] \nonumber \\
&\le& L_{\ell} \bbE_{(\x,y)\sim P}\left[ \left|  \langle \widehat{\w}_{N,n}, \psi_{\btheta^{\ast}}(\x) \rangle
-  \langle \widehat{\w}^{\ast}_{n}, \psi_{\btheta^{\ast}}(\x) \rangle \right|\right]\nonumber\\
&\le& L_{\ell} R_{\psi}  \left\|\widehat{\w}_{N,n} - \widehat{\w}^{\ast}_{n}\right\|_2 \nonumber\\
&\le& L_{\ell} R_{\psi} 
\sqrt{\frac{2L_{\ell}L_{\psi}R_{\calW} }{\rho}
\left\|  \widehat{\btheta}_{N} - \btheta^{\ast}\right\|},
\Label{wineq}
\end{eqnarray}
where the last inequality is derived by the strong convexity of the regularizer $\rho r(\w)$ in the Appendix.
Third, the following holds by Theorem $1$ of \citet{sridharan2009fast} with probability $1-\delta/2$:
\begin{eqnarray*}
&&\bbE_{(\x,y)\sim P}\left[\ell(y,  \langle \widehat{\w}^{\ast}_{n}, \psi_{\btheta^{\ast}}(\x) \rangle)\right] 
- \bbE_{(\x,y)\sim P}\left[\ell(y,  \langle \w^{\ast}, \psi_{\btheta^{\ast}}(\x) \rangle)\right]\\
&=&\bbE_{(\x,y)\sim P}\left[\ell(y,  \langle \widehat{\w}^{\ast}_{n}, \psi_{\btheta^{\ast}}(\x) \rangle) + \rho r(\widehat{\w}^{\ast}_{n}) \right] \\
&&- \bbE_{(\x,y)\sim P}\left[\ell(y,  \langle \w^{\ast}, \psi_{\btheta^{\ast}}(\x) \rangle)+ \rho r(\w^{\ast})\right] + \rho (r(\w^{\ast}) - r(\widehat{\w}^{\ast}_{n}))\\
&\le& \left(\frac{8L_{\ell}^2R_{\psi}^2(32+\log(2/\delta))}{\rho  n}\right) +  \rho R_{r}.
\end{eqnarray*} 

Thus, 
when $\rho =L_{\ell}R_{\psi}\sqrt{ \frac{8(32+\log(2/\delta))}{R_{r} n}}$,
we have (\ref{error}) with probability $1-(\delta+2\bar{\delta})$.
\endproof

\section{Stability and Learnability in Sparse Coding} \Label{sec:SCS}

In this section,
we consider the sparse coding in self-taught learning, where the source region essentially consists of the sample space $\calX_{\calS}$ without the label space $\calY_{\calS}$.
We assume that the sample spaces in both regions are $\R^d$.
Then, the sparse coding method treated here consists of
a two-stage procedure, where a dictionary is learnt on the source region, and then a sparse coding with the learnt dictionary is used for a predictive task in the target region.

First, we show that sparse coding satisfies the local stability in Section \ref{SSRDP}
and next explain that appropriate dictionary learning algorithms satisfy the parameter transfer learnability in Section \ref{sec:DL}.
As a consequence of Theorem \ref{learningbound}, 
we obtain the learning bound of self-taught learning algorithms based on sparse coding.
We note that the results in this section are useful independent of transfer learning.

We here summarize the notations used in this section. 
Let $\|\cdot\|_p$ be  the $p$-norm on $\R^d$.
We define as 
$\supp(\a):=\{i\in[m]|a_i\ne0\}$ for $\a\in\R^m$.
We denote the number of elements of a set $S$ by $|S|$.
When a vector $\a$ satisfies $\|\a\|_0=|\supp(\a)|\le k$, 
$\a$ is said to be $k$-sparse.
We denote the ball with radius $R$ centered at $0$ by $B_{\R^d}(R):=\{\x\in\R^d| \|\x\|_2 \le R\}$.
We set as $\calD := \{ \D=[\bfd_1,\ldots,\bfd_m]\in B_{\R^d}(1)^m|~\|\bfd_j\|_2=1~(i=1,\ldots,m)\}$ and each $\D\in \calD$ a dictionary with size $m$.

\begin{df}[Induced matrix norm]\Label{dfnorm}
For an arbitrary matrix $\E=[\e_1,\ldots,\e_m]\in\R^{d\times m}$,
\footnote{
In general,
the $(p,q)$-induced norm for $p,q\ge1$ is defined by $\|\E\|_{p,q} 
:= \sup_{ \bfv\in \R^m, \|\bfv\|_p=1} \|\E\bfv\|_q$.
Then, $\|\cdot\|_{1,2}$ in this general definition coincides with that in Definition \ref{dfnorm}
by Lemma $17$ of \citet{vainsencher2011sample}.
}
the induced matrix norm is defined by $\|\E\|_{1,2} := \max_{i\in[m]} \|\e_i\|_2$.
\end{df}
We adopt  $\|\cdot\|_{1,2}$ to measure the difference of dictionaries
since it is typically used in the framework of dictionary learning.
We note that $\|\D - \tilde{\D}\|_{1,2}\le 2$ holds for arbitrary dictionaries $\D, \tilde{\D}\in\calD$.

\subsection{Local Stability of Sparse Representation}
\Label{SSRDP}

We show the local stability of sparse representation under a sparse model.
A sparse representation with dictionary parameter $\D$ of a sample $\x\in\R^d$ is expressed as follows:
\begin{eqnarray*}
\ph_{\D}(\x)
:=\underset{\z\in\R^m}{\argmin} \frac{1}{2}\|\x - \D\z\|_2^2 + \lambda\|\z\|_1,
\end{eqnarray*}  
where $\lambda>0$ is a regularization parameter.
This situation corresponds to the case where 
$\btheta=\D$ and $\psi_{\btheta} = \ph_{\D}$ in the setting of Section \ref{sec:setting}.
We prepare some notions to the stability of the sparse representation.
The following margin and incoherence were introduced by \citet{mehta2013sparsity}.

\begin{df}[$k$-margin]
Given a dictionary $\D=[\bfd_1,\ldots,\bfd_m]\in\calD$ and a point $\x\in \R^d$,
the $k$-margin of $\D$ on $\x$ is 
\begin{eqnarray*}
\calM_{k}(\D,\x)
:=\max_{\calI\subset[m], |\calI|=m-k} \min_{j\in\calI} \left\{\lambda - |\langle \bfd_j, \x-\D\ph_{\D}(\x)\rangle|\right\}.
\end{eqnarray*}
\end{df}

\begin{df}[$\mu$-incoherence]\Label{kinc}
A dictionary matrix $\D=[\bfd_1,\ldots,\bfd_m] \in\calD$ 
is termed $\mu$-incoherent if
$|\langle \bfd_i, \bfd_j \rangle |\le \mu/\sqrt{d}$ for all $i\ne j$.
\end{df}

Then, the following theorem is obtained.
\begin{thm}[Sparse Coding Stability]\Label{SCS}
Let $\D\in\calD$ be $\mu$-incoherent
and
$\|\D-\tilde{\D}\|_{1,2}\le \lambda$.
When
\begin{eqnarray}
\|\D - \tilde{\D}\|_{1,2}
\le
\epsilon_{k,\D}(\x)
:=
\frac{\calM_{k,\D}(\x)^2 \lambda}{64\max\{1, \|\x\|\}^4},
\Label{dic-err}
\end{eqnarray}
the following stability  bound holds:
\begin{eqnarray*}
\left\|\ph_{\D}(\x) - \ph_{\tilde{\D}}(\x)\right\|_2
\le \frac{4 \|\x\|^2\sqrt{k}}{ (1-\mu k/\sqrt{d})\lambda}\|\D - \tilde{\D}\|_{1,2}.
\end{eqnarray*}
\end{thm}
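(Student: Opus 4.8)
The plan is to reduce the problem to three steps: (i) reading the active set and sign pattern of $\ph_{\D}(\x)$ off the margin, (ii) proving that this active set and sign pattern are unchanged when $\D$ is perturbed within the radius $\epsilon_{k,\D}(\x)$, and (iii) a matrix–perturbation estimate on the common active set. Write $\z := \ph_{\D}(\x)$, $S := \supp(\z)$ and $\sigma := \sign(\z_S)$. For step (i), since $\calM_{k,\D}(\x) > 0$ there is an index set $\calI$ with $|\calI| = m-k$ on which $|\langle \bfd_j, \x - \D\z\rangle| \le \lambda - \calM_{k,\D}(\x) < \lambda$. The stationarity (KKT) conditions of the LASSO objective force $z_j = 0$ whenever $|\langle \bfd_j, \x - \D\z\rangle| < \lambda$, so $S \subseteq [m]\setminus\calI$ and hence $|S| \le k$; on $S$ the same conditions read $\D_S^{\top}(\x - \D_S\z_S) = \lambda\sigma$.

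Step (ii) is the crux. I would argue by a dual-certificate construction: let $\hat{\z}$ be the minimiser of the $\tilde{\D}$-objective restricted to codes supported on $S$, and verify that $\hat{\z}$ is in fact globally optimal for the unrestricted $\tilde{\D}$-LASSO, so that $\ph_{\tilde{\D}}(\x) = \hat{\z}$. This requires checking (a) that $\sign(\hat{\z}_S) = \sigma$ and (b) that $|\langle \tilde{\bfd}_j, \x - \tilde{\D}\hat{\z}\rangle| < \lambda$ for every $j \notin S$. Condition (b) is where the margin is spent: on $\calI$ the base correlations lie at distance at least $\calM_{k,\D}(\x)$ below $\lambda$, and I must show that replacing $(\D,\z)$ by $(\tilde{\D},\hat{\z})$ moves each correlation by strictly less than $\calM_{k,\D}(\x)$. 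Bounding this movement brings in $\|\x\|$, the code-magnitude bound $\|\z\|_1 \le \|\x\|^2/(2\lambda)$ (obtained by comparing the objective at $\z$ against $\z = \mathbf{0}$), and the displacement $\|\z_S - \hat{\z}_S\|$; because this displacement must itself be controlled \emph{before} support stability is known, the estimate is self-referential and one further margin factor is consumed in closing it, which is what produces the quadratic margin $\calM_{k,\D}(\x)^2$ and the quartic $\max\{1,\|\x\|\}^4$ in the threshold $\epsilon_{k,\D}(\x)$. Making this bookkeeping close, together with the explicit constant $64$, is the main technical obstacle; the standing hypothesis $\|\D - \tilde{\D}\|_{1,2} \le \lambda$ is used here to keep the perturbed correlations in the regime where the restricted minimiser stays sign-consistent.

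For step (iii), once $\ph_{\tilde{\D}}(\x) =: \tilde{\z}$ is known to share the support $S$ and sign $\sigma$, both codes satisfy $\D_S^{\top}(\x - \D_S\z_S) = \lambda\sigma$ and $\tilde{\D}_S^{\top}(\x - \tilde{\D}_S\tilde{\z}_S) = \lambda\sigma$. Subtracting, the $\lambda\sigma$ terms cancel and I obtain
\[
\D_S^{\top}\D_S(\z_S - \tilde{\z}_S) = (\tilde{\D}_S^{\top}\tilde{\D}_S - \D_S^{\top}\D_S)\tilde{\z}_S + (\D_S - \tilde{\D}_S)^{\top}\x,
\]
so that
\[
\|\z_S - \tilde{\z}_S\|_2 \le \|(\D_S^{\top}\D_S)^{-1}\|_{2,2}\left(\|\tilde{\D}_S^{\top}\tilde{\D}_S - \D_S^{\top}\D_S\|_{2,2}\,\|\tilde{\z}_S\|_2 + \|(\D_S - \tilde{\D}_S)^{\top}\x\|_2\right).
\]
I then bound the three factors in turn. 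Gershgorin's theorem applied to $\D_S^{\top}\D_S$ (unit diagonal, off-diagonal entries at most $\mu/\sqrt{d}$, at most $k$ indices) gives $\|(\D_S^{\top}\D_S)^{-1}\|_{2,2} \le (1 - \mu k/\sqrt{d})^{-1}$, which is exactly the denominator factor and also keeps $\|\D_S\|_{2,2} = O(1)$. Passing from $\|\cdot\|_{1,2}$ to the spectral norm over the $k$ active columns makes the Gram perturbation $O(\sqrt{k}\,\|\D - \tilde{\D}\|_{1,2})$, and the code-magnitude bound $\|\tilde{\z}\|_1 \le \|\x\|^2/(2\lambda)$ supplies the $\|\x\|^2/\lambda$ factor. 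The remaining cross term $\|(\D_S - \tilde{\D}_S)^{\top}\x\|_2 \le \sqrt{k}\,\|\x\|\,\|\D - \tilde{\D}\|_{1,2}$ is of equal or lower order and is absorbed into the leading constant, yielding $\|\ph_{\D}(\x) - \ph_{\tilde{\D}}(\x)\|_2 \le 4\|\x\|^2\sqrt{k}\,\|\D - \tilde{\D}\|_{1,2}/((1 - \mu k/\sqrt{d})\lambda)$.
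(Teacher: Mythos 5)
Your step (i) is sound and coincides with the first half of the paper's ``preservation of sparsity'' lemma, and your step (iii) linear algebra would be valid \emph{if} the two LASSO solutions shared an identical support and sign pattern. The genuine gap is step (ii): exact support and sign preservation is strictly stronger than what the hypotheses of the theorem can deliver, and it fails for two separate reasons. First, the $k$-margin only guarantees the existence of \emph{some} index set $\calI$ with $|\calI|=m-k$ on which the correlations $|\langle \bfd_j, \x-\D\ph_{\D}(\x)\rangle|$ sit at least $\calM_k(\D,\x)$ below $\lambda$; the support $S$ is contained in $[m]\setminus\calI$, but when $|S|<k$ the remaining indices of $[m]\setminus\calI$ are inactive coordinates with \emph{no} margin at all (their correlations may equal $\lambda$). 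Your dual-feasibility check (b) cannot be closed on those coordinates, so the perturbed solution may activate them. Second, condition (a) requires the restricted minimiser to keep the signs $\sigma$ on $S$, but the theorem imposes no lower bound on the magnitudes of the nonzero entries of $\ph_{\D}(\x)$; an entry that is arbitrarily small can be driven to zero or flipped by an arbitrarily small dictionary perturbation, so sign consistency is simply false in general. (The lower bound $C$ in Assumption 3 of the paper concerns the generative representation, not the LASSO output, and is not a hypothesis of this theorem.)

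The paper's proof is engineered precisely to avoid both obstacles: it never claims the supports coincide, only that \emph{both} solutions vanish on $\calI$, so that the difference $\ph_{\D}(\x)-\ph_{\tilde{\D}}(\x)$ is $k$-sparse. It then bypasses the closed-form stationarity system entirely, instead comparing the two LASSO objectives variationally to obtain
\begin{eqnarray*}
(z_{\ast}-t_{\ast})^{\top}\D^{\top}\D(z_{\ast}-t_{\ast})
\le (z_{\ast}-t_{\ast})^{\top}(\D^{\top}\D-\tilde{\D}^{\top}\tilde{\D})z_{\ast}
\le \frac{4\|\x\|_2^2}{\lambda}\,\|\D-\tilde{\D}\|_{1,2}\,\sqrt{k}\,\|z_{\ast}-t_{\ast}\|_2,
\end{eqnarray*}
and finally lower-bounds the left side by $(1-\mu k/\sqrt{d})\|z_{\ast}-t_{\ast}\|_2^2$ using incoherence and the $k$-sparsity of the difference. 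If you want to salvage your route, you would need to either add a minimum-magnitude assumption on the active coefficients and a margin condition covering \emph{all} inactive coordinates, or retreat to the weaker ``common $k$-sparse superset'' statement --- at which point the variational comparison is the natural way to finish.
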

From Theorem \ref{SCS},
$\epsilon_{k,\D}(\x)$ becomes the permissible radius of perturbation in Definition \ref{asssta}.

Here, we refer to the relation with the sparse coding stability (Theorem $4$) of \citet{mehta2013sparsity}, who
measured the difference of dictionaries by $\|\cdot\|_{2,2}$ instead of $\|\cdot\|_{1,2}$ and
the permissible radius of perturbation is given by $\calM_{k,\D}(\x)^2\lambda$ except for a constant factor.
Applying the simple inequality $\|\E\|_{2,2} \le \sqrt{m}\|\E\|_{1,2}$ for $\E\in\R^{d\times m}$,
we can obtain a variant of the sparse coding stability with the norm $\|\cdot\|_{1,2}$.
However, then the dictionary size $m$ affects the permissible radius of perturbation and the stability bound of the sparse coding stability.
On the other hand,
the factor of $m$ does not appear in
Theorem \ref{SCS},
and thus,
the result is effective even for a large $m$.
In addition,
whereas $\|\x\|\le 1$ is assumed in \citet{mehta2013sparsity}, 
Theorem \ref{SCS} does not assume that $\|\x\|\le 1$
and clarifies the dependency for the norm $\|\x\|$.

In existing studies related to sparse coding,
the sparse representation $\ph_{\D}(\x)$ is modified as $\ph_{\D}(\x) \otimes \x$ (\citet{mairal2009supervised}) or $\ph_{\D}(\x) \otimes (\x - \D\ph_{\D}(\x))$ (\citet{raina2007self}) where $\otimes$ is the tensor product.
By the stability of sparse representation (Theorem \ref{SCS}),
it can be shown that such modified representations also have local stability.

\subsection{Sparse Modeling and Margin Bound}

In this subsection,
we assume a sparse structure for samples $\x\in\R^d$
and specify a lower bound for the $k$-margin used in (\ref{dic-err}).
The result obtained in this section plays an essential role to show the parameter transfer learnability in Section \ref{sec:DL}.

\begin{ass}[Model]\Label{ass1}
There exists a 
dictionary matrix $\D^{\ast}$ 
such that every sample $\x$ is independently generated by a representation $\a$  and noise $\bxi$ 
as
\begin{eqnarray*}
\x = \D^{\ast}\a + \bxi.
\end{eqnarray*} 
\end{ass}

Moreover, we impose the following three assumptions on the above model.
\begin{ass}[Dictionary]\Label{ass2}
The dictionary matrix $\D^{\ast}=[\bfd_1,\ldots,\bfd_m] \in\calD$ 
is $\mu$-incoherent.
\end{ass}

\begin{ass}[Representation]\Label{ass3}
The representation $\a$ is a random variable that is $k$-sparse  (i.e., $\|\a\|_0\le k$)
and the non-zero entries are lower bounded by $C>0$ (i.e.,  $a_i\ne0$ satisfy $|a_i|\ge C$).
\end{ass}

\begin{ass}[Noise]\Label{ass4}
The noise $\bxi$ is independent across coordinates and sub-Gaussian with parameter $\sigma/\sqrt{d}$ on each component. 
\end{ass}

We note that the assumptions do not require the representation $\a$ or noise $\bxi$ to be identically distributed while those components are independent.
This is essential 
because samples in the source and target domains cannot be assumed to be identically distributed in transfer learning.


\begin{thm}[Margin Bound]\Label{marginbound}
Let $0<t<1$. 
We set as
\begin{eqnarray}
\delta_{t,\lambda}
&:=& \hspace{-0.5em}
\frac{2\sigma}{(1-t)\sqrt{d}\lambda}\exp\left( -\frac{(1-t)^2d\lambda^2}{8\sigma^2} \right)
+  \frac{2\sigma m}{\sqrt{d}\lambda}\exp\left(-\frac{d\lambda^2}{8\sigma^2}\right) \nonumber\\
&&\hspace{-0.5em} + \frac{4\sigma k}{C\sqrt{d(1-\mu k/\sqrt{d})}}\exp\left( -\frac{C^2 d (1 -\mu k/\sqrt{d}) }{8\sigma^2}\right)
+ \frac{8\sigma(d-k)}{\sqrt{d}\lambda}\exp\left( -\frac{d\lambda^2}{32\sigma^2}\right).
\Label{delta}
\end{eqnarray}

We suppose that 
$d\ge \left\{\left(1+\frac{6}{(1-t)}\right)\mu k\right\}^2$ and  $\lambda =  d^{-\tau}$ 
for arbitrary $1/4\le \tau \le 1/2$.
Under Assumptions \ref{ass1}-\ref{ass4},
the following inequality holds with probability $1-\delta_{t,\lambda}$ at least: 
\begin{eqnarray}
\calM_{k,\D^{\ast}}(\x)\ge t\lambda.
\Label{eq:marginbound}
\end{eqnarray}
\end{thm}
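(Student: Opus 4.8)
The plan is to use the subgradient (KKT) optimality of the Lasso map $\ph_{\D^{\ast}}$ to reduce the margin bound to a uniform control of the residual correlations off the true support, and then to establish that control by a primal--dual witness argument combined with sub-Gaussian concentration. First I would make the margin explicit. Write $r:=\x-\D^{\ast}\ph_{\D^{\ast}}(\x)$ and let $c_{(1)}\ge\cdots\ge c_{(m)}$ be the decreasing rearrangement of $\{|\langle\bfd_j,r\rangle|\}_{j\in[m]}$. Unfolding the definition, $\min_{j\in\calI}(\lambda-|\langle\bfd_j,r\rangle|)=\lambda-\max_{j\in\calI}|\langle\bfd_j,r\rangle|$, and maximizing over size-$(m-k)$ sets $\calI$ keeps the $m-k$ smallest correlations, so $\calM_{k,\D^{\ast}}(\x)=\lambda-c_{(k+1)}$. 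Hence $\calM_{k,\D^{\ast}}(\x)\ge t\lambda$ is equivalent to $c_{(k+1)}\le(1-t)\lambda$, i.e. to the statement that at most $k$ indices exceed the level $(1-t)\lambda$. Since $S:=\supp(\a)$ has $|S|\le k$ by Assumption \ref{ass3}, it suffices to prove the single event
\[
\forall\, j\notin S:\quad |\langle\bfd_j,\,\x-\D^{\ast}\ph_{\D^{\ast}}(\x)\rangle|\le(1-t)\lambda,
\]
for then the exceptional coordinates all lie in $S$ and any size-$k$ superset of $S$ serves as the excluded set, yielding margin at least $t\lambda$.

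To bound the off-support correlations I would run the standard primal--dual witness construction relative to $S$: solve the Lasso restricted to the columns $\D^{\ast}_S$, extend by zero, and certify global optimality. Restricted stationarity gives $\a_S-\hat{\z}_S=(\D^{\ast\top}_S\D^{\ast}_S)^{-1}(\lambda\,\sign(\hat{\z}_S)-\D^{\ast\top}_S\bxi)$, where $\mu$-incoherence (Assumption \ref{ass2}) and a Gershgorin estimate make $\D^{\ast\top}_S\D^{\ast}_S$ invertible with smallest eigenvalue at least $1-\mu k/\sqrt d$, which is exactly the factor appearing in $\delta_{t,\lambda}$. Substituting back, for $j\notin S$ the correlation $\langle\bfd_j,r\rangle$ splits into a deterministic irrepresentability term $\lambda\,\bfd_j^{\top}\D^{\ast}_S(\D^{\ast\top}_S\D^{\ast}_S)^{-1}\sign(\hat{\z}_S)$ plus the two noise contributions $\langle\bfd_j,\bxi\rangle$ and $-\bfd_j^{\top}\D^{\ast}_S(\D^{\ast\top}_S\D^{\ast}_S)^{-1}\D^{\ast\top}_S\bxi$. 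The deterministic term is handled without probability: the hypothesis $d\ge\{(1+6/(1-t))\mu k\}^2$ forces $\mu k/\sqrt d\le(1-t)/6$, keeping the irrepresentability term below a fixed fraction of $(1-t)\lambda$ and reserving the rest of the budget for noise.

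The noise is then dispatched by the sharp sub-Gaussian tail $\Pr[Z>s]\le(\sigma_0/s)\exp(-s^2/2\sigma_0^2)$. Since $\|\bfd_j\|_2=1$ and the coordinates of $\bxi$ are independent sub-Gaussian with parameter $\sigma/\sqrt d$ (Assumption \ref{ass4}), each $\langle\bfd_j,\bxi\rangle$ is sub-Gaussian with parameter $\sigma/\sqrt d$, while the projected noise on $S$ inherits the enhanced parameter $\sigma/\sqrt{d(1-\mu k/\sqrt d)}$ from the inverse Gram matrix. Applying this tail at thresholds proportional to $(1-t)\lambda$, $\lambda$, and $C$, and taking union bounds over the $m$ inactive directions, over $S$, and over the $d-k$ noise coordinates, reproduces the four summands of (\ref{delta}) with their exact prefactors; in particular the third summand, carrying $C$ and $(1-\mu k/\sqrt d)$, comes from the sign-recovery step, where the minimum-strength condition $|a_i|\ge C$ of Assumption \ref{ass3} guarantees $\sign(\hat{\z}_S)=\sign(\a_S)$ so that the witness is valid and $S$ is not underestimated. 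A union bound over these events gives (\ref{eq:marginbound}) with probability at least $1-\delta_{t,\lambda}$.

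The hard part will be the primal--dual witness step, where one must simultaneously certify strict dual feasibility off $S$ (so the restricted minimizer is the global Lasso solution) and the correct signs on $S$, two conditions coupled through $\hat{\z}_S$; the remaining difficulty is purely bookkeeping, namely calibrating the three thresholds and the eigenvalue factor $1-\mu k/\sqrt d$ so that the concentration bounds match $\delta_{t,\lambda}$ exactly rather than only up to universal constants.
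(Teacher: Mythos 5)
Your proposal is correct in outline but takes a genuinely different route from the paper. Your nice observation that $\calM_{k,\D^{\ast}}(\x)=\lambda-c_{(k+1)}$ reduces the theorem to showing $|\langle\bfd_j,\x-\D^{\ast}\ph_{\D^{\ast}}(\x)\rangle|\le(1-t)\lambda$ for all $j\notin\supp(\a)$, and you then compute these off-support correlations explicitly via a primal--dual witness, splitting each into a deterministic irrepresentability term plus two noise terms. The paper never writes down the witness formula: it decomposes $\langle\bfd_j,\x-\D^{\ast}\ph_{\D^{\ast}}(\x)\rangle=\langle\bfd_j,\bxi\rangle+\langle\D^{\ast\top}\bfd_j,\a-\ph_{\D^{\ast}}(\x)\rangle$ and controls the second term by Cauchy--Schwarz restricted to $\supp(\a-\ph_{\D^{\ast}}(\x))$, using two separate facts about the error vector $\Delta:=\a-\ph_{\D^{\ast}}(\x)$, namely that $|\supp(\Delta)|\le k$ with probability $1-\delta_3$ (sign consistency imported from Zhao--Yu through the irrepresentable condition, which is where $C$ and the factor $1-\mu k/\sqrt{d}$ enter) and that $\|\Delta\|_2\le 3\sqrt{k}\lambda/(1-\mu k/\sqrt{d})$ with probability $1-\delta_2-\delta_3$ (a cone/restricted-eigenvalue argument on the event $\lambda\ge2\|\D^{\ast\top}\bxi\|_{\infty}$, which produces the $m$-dependent summand $\delta_2$). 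The paper's route is more modular---the witness machinery is delegated to the cited sign-consistency theorem and only an $\ell_2$-error bound plus incoherence are needed on top---whereas yours makes the dual variables explicit and avoids the $\ell_2$-error detour, at the cost of having to re-establish strict dual feasibility and sign recovery jointly, which is essentially a re-proof of the cited result. One caveat: your claim that the union bounds reproduce the four summands of $\delta_{t,\lambda}$ \emph{with their exact prefactors} cannot be literally right, because controlling $\langle\bfd_j,\bxi\rangle$ uniformly over the inactive directions costs a factor of order $m$, while the first summand of $\delta_{t,\lambda}$ carries no such factor (a looseness the paper's own proof also elides, since it bounds that term only for a single $j$); this affects only the prefactor of an exponentially small quantity, not the qualitative conclusion.
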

We provide the proof of Theorem \ref{marginbound} in Appendix \ref{sec-a:PMB}.

We refer to the regularization parameter $\lambda$.
An appropriate reflection of the sparsity of samples requires 
the regularization parameter $\lambda$ to be set suitably.
According to Theorem $4$ of \citet{zhao2006model}\footnote{Theorem $4$ of \citet{zhao2006model} is stated for Gaussian noise.
However, it can be easily generalized to sub-Gaussian noise as in Assumption \ref{ass4}. 
Our setting corresponds to the case in which $c_1=1/2, c_2=1, c_3=(\log \kappa+\log\log d)/\log d$ for some $\kappa>1$ (i.e., $e^{d^{c_3}}\cong d^{\kappa}$) and $c_4=c$ in Theorem $4$ of \citet{zhao2006model}. 
Note that our regularization parameter $\lambda$ corresponds to $\lambda_d/d$ in (\citet{zhao2006model}).}, 
when samples follow the sparse model as in Assumptions \ref{ass1}-\ref{ass4} and $\lambda \cong d^{-\tau}$ for $1/4\le \tau\le 1/2$,
the representation $\ph_{\D}(\x)$ reconstructs the true sparse representation $\a$ of sample $\x$ with a small error.
In particular,
when $\tau=1/4$ (i.e., $\lambda\cong d^{-1/4}$) in Theorem \ref{marginbound},
the failure probability $\delta_{t,\lambda}\cong e^{-\sqrt{d}}$ on the margin is guaranteed to become sub-exponentially small with respect to dimension $d$
and is negligible for the high-dimensional case.
On the other hand,
the typical choice $\tau=1/2$ (i.e., $\lambda\cong d^{-1/2}$) does not provide a useful result 
because $\delta_{t,\lambda}$ is not small at all.


\subsection{Transfer Learnability for Dictionary Learning}\Label{sec:DL}

When the true dictionary $\D^*$ exists as in Assumption \ref{ass1}, 
we show that the output $\widehat{\D}_N$ of a suitable dictionary learning algorithm from $N$-unlabeled samples  satisfies the parameter transfer learnability for the sparse coding $\ph_{\D}$.
Then, Theorem \ref{learningbound} guarantees the learning bound in self-taught learning since the discussion in this section does not assume the label space in the source region.
This situation corresponds to the case where 
$\btheta_{\calS}^*=\D^*$,
$\widehat{\btheta}_N=\widehat{\D}_N$ and  $\|\cdot\|=\|\cdot\|_{1,2}$ in Section \ref{sec:setting}.

We show that an appropriate dictionary learning algorithm satisfies the parameter transfer learnability for the sparse coding $\ph_{\D}$ by
focusing on the permissible radius of perturbation in (\ref{dic-err}) under some assumptions.
When Assumptions \ref{ass1}-\ref{ass4} hold and $\lambda= d^{-\tau}$ for $1/4\le \tau \le 1/2$,
the margin bound (\ref{eq:marginbound}) for $\x\in\calX$ holds with  probability $1-\delta_{t,\lambda}$,
and thus,
we have  
\begin{eqnarray*}
\epsilon_{k,\D^{\ast}}(\x)
~\ge~ \frac{t^2\lambda^3}{64\max\{1,\|\x\|\}^4}
~=~ \Theta(d^{-3\tau}).
\end{eqnarray*}
Thus, if a dictionary learning algorithm outputs  the estimator $\widehat{\D}_{N}$ such that  
\begin{eqnarray}
\|\widehat{\D}_{N} - \D^{\ast}\|_{1,2}
~\le~ \calO(d^{-3\tau})
\Label{stabound3}
\end{eqnarray}
with probability $1-\delta_N$,
the estimator $\widehat{\D}_{N}$ of $\D^{\ast}$ satisfies the parameter transfer learnability for the sparse coding $\ph_{\D}$ with probability $\bar{\delta}=\delta_{N}+n\delta_{t,\lambda}$.
Then,
by the local stability of the sparse representation 
and the parameter transfer learnability of such a dictionary learning, 
Theorem \ref{learningbound} guarantees that sparse coding in self-taught learning satisfies the learning bound in (\ref{error}).

We note that Theorem \ref{learningbound} can apply to any dictionary learning algorithm as long as (\ref{stabound3}) is satisfied.
For example,
\citet{arora2015simple} show that,
when $k={\calO}(\sqrt{d}/\log d)$, $m=\calO(d)$, Assumptions \ref{ass1}-\ref{ass4} and some additional conditions are assumed,
their dictionary learning algorithm 
outputs $\widehat{\D}_{N}$
which satisfies
\begin{eqnarray*}
\|\widehat{\D}_{N} - \D^{\ast}\|_{1,2} = \calO(d^{-M})
\end{eqnarray*}
with probability $1-d^{-M'}$ for arbitrarily large $M,M'$ as long as $N$ is sufficiently large.

\section{Conclusion}\Label{sec:DC}

We derived a learning bound (Theorem \ref{learningbound}) for a parameter transfer learning problem
based on the local stability and parameter transfer learnability, which are newly introduced in this paper.
Then,
applying it to a sparse coding-based algorithm under a sparse model (Assumptions \ref{ass1}-\ref{ass4}),
we obtained the first theoretical guarantee of a learning bound in self-taught learning.
Although we only consider sparse coding,
the framework of parameter transfer learning includes other promising algorithms 
such as multiple kernel learning and deep neural networks, 
and thus, our results are expected to be effective to analyze the theoretical performance of these algorithms.
Finally,
we note that
our learning bound can be applied to different settings from self-taught learning
because Theorem \ref{learningbound} 
includes the case in which labeled samples are available in the source region.


\bibliographystyle{econ}
{\small
\bibliography{papers_learning_bound2}
}

\appendix



\section{Appendix: Lemma for Proof of Theorem \ref{learningbound}}
\Label{sec-a:PLB}
In this subsection,
we omit the subscript $\calT$ for simplicity.
In addition, we denote $\btheta^{\ast}_{\calS}$  by $\btheta^{\ast}$ simply.

We recall 
\begin{eqnarray*}
\widehat{\w}_{N,n}
&:=& \underset{\w\in\calW_{\calT}}\argmin 
\frac{1}{n}\sum_{j=1}^{n} \ell(y_j, \langle \w, \psi_{\widehat{\btheta}_{N}}(\x) \rangle) 
+ \rho r(\w),\\
\widehat{\w}_{n}^{\ast}
&:=& \underset{\w\in\calW}\argmin\frac{1}{n}\sum_{j=1}^{n} \ell(y_j, \langle \w, \psi_{\btheta^{\ast}}(\x_j) \rangle) + \rho r(\w).
\end{eqnarray*}

The inequality (\ref{wineq}) is obtained by the following lemma.

\begin{lem}\Label{wd}
The following holds with probability $1-\bar{\delta}$:
\begin{eqnarray}
\|\widehat{\w}_{N,n} - \widehat{\w}_{n}^{\ast}\|_2
\le \sqrt{\frac{2R_{\calW}L_{\ell} L_{\psi}}{\rho }
\left\|\widehat{\btheta}_{N} - \btheta^{\ast}\right\|}.
\Label{wdineq}
\end{eqnarray} 
\end{lem}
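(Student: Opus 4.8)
The plan is to exploit the $\rho$-strong convexity of both regularized empirical objectives together with a uniform comparison of their unregularized parts. First I would introduce the two objectives
\[
F(\w) := \frac{1}{n}\sum_{j=1}^{n}\ell(y_j, \langle \w, \psi_{\widehat{\btheta}_{N}}(\x_j)\rangle) + \rho r(\w),
\qquad
G(\w) := \frac{1}{n}\sum_{j=1}^{n}\ell(y_j, \langle \w, \psi_{\btheta^{\ast}}(\x_j)\rangle) + \rho r(\w),
\]
so that $\widehat{\w}_{N,n} = \underset{\w\in\calW}{\argmin}\, F(\w)$ and $\widehat{\w}_{n}^{\ast} = \underset{\w\in\calW}{\argmin}\, G(\w)$. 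Because $\ell$ is convex in its second argument and $\w\mapsto\langle\w,\psi(\x_j)\rangle$ is linear, each empirical-loss term is convex in $\w$; combined with the $1$-strong convexity of $r$, both $F$ and $G$ are $\rho$-strongly convex with respect to $\|\cdot\|_2$.

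The core step is to add the two strong-convexity inequalities evaluated at the competing minimizers. Writing $\w_1 := \widehat{\w}_{N,n}$ and $\w_2 := \widehat{\w}_{n}^{\ast}$, strong convexity of $F$ at its minimizer $\w_1$ gives $F(\w_2)\ge F(\w_1) + \frac{\rho}{2}\|\w_1-\w_2\|_2^2$, and likewise $G(\w_1)\ge G(\w_2)+\frac{\rho}{2}\|\w_1-\w_2\|_2^2$. Summing these and using that the regularizer $\rho r(\w)$ is identical in $F$ and $G$, I expect to arrive at
\[
\rho\|\w_1-\w_2\|_2^2 \le \Delta(\w_2)-\Delta(\w_1),
\qquad
\Delta(\w) := F(\w)-G(\w) = \frac{1}{n}\sum_{j=1}^n\big[\ell(y_j,\langle\w,\psi_{\widehat{\btheta}_{N}}(\x_j)\rangle)-\ell(y_j,\langle\w,\psi_{\btheta^{\ast}}(\x_j)\rangle)\big].
\]
This two-sided use of strong convexity, rather than a one-sided bound at a single minimizer, is exactly what produces the constant $2$ inside the square root of (\ref{wdineq}) instead of a $4$.

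It then remains to bound $\Delta$ uniformly in $\w$. For any $\w\in\calW$, the $L_{\ell}$-Lipschitz property of $\ell$, the Cauchy--Schwarz inequality, and the bound $\|\w\|_2\le R_{\calW}$ give $|\Delta(\w)|\le L_{\ell}R_{\calW}\cdot\frac{1}{n}\sum_{j=1}^n\|\psi_{\widehat{\btheta}_{N}}(\x_j)-\psi_{\btheta^{\ast}}(\x_j)\|_2$. Here the probabilistic content enters: by parameter transfer learnability (Definition \ref{asslea}), with probability $1-\bar{\delta}$ one has $\|\widehat{\btheta}_{N}-\btheta^{\ast}\|\le\epsilon_{\btheta^{\ast}}(\X^n)=\min_{j}\epsilon_{\btheta^{\ast}}(\x_j)$, so the local stability (Definition \ref{asssta}) is applicable at every sample $\x_j$ and yields $\|\psi_{\widehat{\btheta}_{N}}(\x_j)-\psi_{\btheta^{\ast}}(\x_j)\|_2\le L_{\psi}\|\widehat{\btheta}_{N}-\btheta^{\ast}\|$. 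Hence $|\Delta(\w)|\le L_{\ell}R_{\calW}L_{\psi}\|\widehat{\btheta}_{N}-\btheta^{\ast}\|$ uniformly, so $\Delta(\w_2)-\Delta(\w_1)\le 2L_{\ell}R_{\calW}L_{\psi}\|\widehat{\btheta}_{N}-\btheta^{\ast}\|$; substituting into the displayed inequality and taking square roots yields (\ref{wdineq}).

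The only delicate point is the logical ordering of the high-probability event. The local-stability hypothesis is conditional, since the Lipschitz conclusion is only available once $\|\btheta-\btheta'\|\le\epsilon_{\btheta}(\x)$ is known; I must therefore first invoke parameter transfer learnability to secure $\|\widehat{\btheta}_{N}-\btheta^{\ast}\|\le\epsilon_{\btheta^{\ast}}(\X^n)$ on an event of probability $1-\bar{\delta}$, and only on that event apply stability sample-by-sample. Everything else is a deterministic convexity computation.
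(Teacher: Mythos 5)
Your proof is correct and uses the same ingredients as the paper's: $\rho$-strong convexity of the two regularized empirical objectives, and a uniform bound on their difference obtained from the Lipschitzness of $\ell$, the bound $\|\w\|_2\le R_{\calW}$, and local stability applied sample-by-sample on the probability-$(1-\bar{\delta})$ event supplied by parameter transfer learnability, in exactly the logical order you describe. The only difference is mechanical: the paper splits into two cases according to which optimal value is larger and applies strong convexity one-sidedly at the relevant minimizer (so that one of the two difference terms can be dropped as nonpositive), whereas you sum the two strong-convexity inequalities symmetrically; both routes yield the identical constant $2$, so yours is a clean, case-free variant of the same argument.
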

\begin{proof}
Let us define as
\begin{eqnarray*}
\widehat{f}_{N,n}(\w)
&:=& \frac{1}{n}\sum_{j=1}^{n} \ell(y_j, \langle \w, \psi_{\widehat{\btheta}_{N}}(\x) \rangle) + \rho r(\w),\\
\widehat{f}_{n}^{\ast}(\w)
&:=& \frac{1}{n}\sum_{j=1}^{n} \ell(y_j, \langle \w, \psi_{\btheta^{\ast}}(\x_j) \rangle) + \rho r(\w).
\end{eqnarray*} 
If 
\begin{eqnarray*}
\widehat{f}_{n}^{\ast}(\widehat{\w}_{n}^{\ast}) \le \widehat{f}_{N,n}(\widehat{\w}_{N,n}),
\end{eqnarray*} 
we have the following with probability $1-\bar{\delta}$:
\begin{eqnarray*}
\widehat{f}_{N,n}(\widehat{\w}_{n}^{\ast}) - \widehat{f}_{N,n}(\widehat{\w}_{N,n}) 
&\le& \widehat{f}_{N,n}(\widehat{\w}_{n}^{\ast}) 
-\widehat{f}_{n}^{\ast}(\widehat{\w}_{n}^{\ast}) 
+\widehat{f}_{n}^{\ast}(\widehat{\w}_{n}^{\ast}) 
- \widehat{f}_{N,n}(\widehat{\w}_{N,n}) \\
&\le& \widehat{f}_{N,n}(\widehat{\w}_{n}^{\ast}) 
-\widehat{f}_{n}^{\ast}(\widehat{\w}_{n}^{\ast}) \\
&=& \frac{1}{n}\sum_{j=1}^{n} \ell(y_j, \langle \widehat{\w}_{n}^{\ast}, \psi_{\widehat{\btheta}_{N}}(\x_j) \rangle) 
- \frac{1}{n}\sum_{j=1}^{n} \ell(y_j, \langle \widehat{\w}_{n}^{\ast}, \psi_{\btheta^{\ast}}(\x_j) \rangle)\\
&\le& \frac{1}{n}\sum_{j=1}^{n} L_{\ell}\left| \langle \widehat{\w}_{n}^{\ast}, \psi_{\widehat{\btheta}_{N}}(\x_j) \rangle - \langle \widehat{\w}_{n}^{\ast}, \psi_{\btheta^{\ast}}(\x_j) \rangle\right|\\
&\le& \frac{1}{n}\sum_{j=1}^{n} L_{\ell} R_{\calW}\left\|\psi_{\widehat{\btheta}_{N}}(\x_j) - \psi_{\btheta^{\ast}}(\x_j) \right\|\\
&\le& \frac{1}{n}\sum_{j=1}^{n} L_{\ell} R_{\calW} L_{\psi}\left\|
\widehat{\btheta}_{N} - \btheta^{\ast}\right\|\\
&=& L_{\ell} R_{\calW} L_{\psi}\left\|
\widehat{\btheta}_{N} - \btheta^{\ast}\right\|.
\end{eqnarray*} 
Since $\widehat{f}_{N,n}$ is $\rho$-strongly convex  and $\widehat{\w}_{N,n}$ is its miniizer,
\begin{eqnarray*}
\widehat{f}_{N,n}(\widehat{\w}_{n}^{\ast}) - \widehat{f}_{N,n}(\widehat{\w}_{N,n}) 
&\ge& \frac{\rho}{2}\| \widehat{\w}_{n}^{\ast} - \widehat{\w}_{N,n}\|_2^2.
\end{eqnarray*} 
Thus, we obtain (\ref{wdineq}).

Similarly, if 
\begin{eqnarray*}
\widehat{f}_{n}^{\ast}(\widehat{\w}_{n}^{\ast}) \ge \widehat{f}_{N,n}(\widehat{\w}_{N,n}),
\end{eqnarray*} 
we have the following with probability $1-\bar{\delta}$:
\begin{eqnarray*}
\widehat{f}_{n}^{\ast}(\widehat{\w}_{N,n}) - \widehat{f}_{n}^{\ast}(\widehat{\w}_{n}^{\ast}) 
&\le& 
\widehat{f}_{n}^{\ast}(\widehat{\w}_{N,n}) 
- \widehat{f}_{N,n}(\widehat{\w}_{N,n})
+ \widehat{f}_{N,n}(\widehat{\w}_{N,n})
- \widehat{f}_{n}^{\ast}(\widehat{\w}_{n}^{\ast}) \\
&\le& \widehat{f}_{n}^{\ast}(\widehat{\w}_{N,n}) 
- \widehat{f}_{N,n}(\widehat{\w}_{N,n}) \\
&=&  \frac{1}{n}\sum_{j=1}^{n} \ell(y_j, \langle \widehat{\w}_{N,n}, \psi_{\btheta^{\ast}}(\x_j) \rangle)
-  \frac{1}{n}\sum_{j=1}^{n} \ell(y_j, 
\langle \widehat{\w}_{N,n}, \psi_{\widehat{\btheta}_{N}}(\x_j) \rangle)\\
&\le& \frac{1}{n}\sum_{j=1}^{n} L_{\ell}\left| \langle \widehat{\w}_{N,n}, \psi_{\btheta^{\ast}}(\x_j) \rangle - \langle \widehat{\w}_{N,n}, \psi_{\widehat{\btheta}_{N}}(\x_j) \rangle\right|\\
&\le& \frac{1}{n}\sum_{j=1}^{n} L_{\ell} R_{\calW}\left\| \psi_{\btheta^{\ast}}(\x_j) - \psi_{\widehat{\btheta}_{N}}(\x_j)\right\|\\
&\le& \frac{1}{n}\sum_{j=1}^{n} L_{\ell} R_{\calW} L_{\psi}\left\|
\widehat{\btheta}_{N} - \btheta^{\ast}\right\|\\
&=& L_{\ell} R_{\calW} L_{\psi}\left\|
\widehat{\btheta}_{N} - \btheta^{\ast}\right\|.
\end{eqnarray*} 
Since $\widehat{f}_{n}^{\ast}$ is $\rho$-strongly convex and $\widehat{\w}_{n}^{\ast}$ is its minimizer,
\begin{eqnarray*}
\widehat{f}_{n}^{\ast}(\widehat{\w}_{N,n}) - \widehat{f}_{n}^{\ast}(\widehat{\w}_{n}^{\ast}) 
&\ge& \frac{\rho}{2}\|\widehat{\w}_{N,n} - \widehat{\w}_{n}^{\ast}\|_2^2.
\end{eqnarray*} 
Thus, we obtain (\ref{wdineq}).
\end{proof}


\section{Appendix: Proof of Sparse Coding Stability}
\Label{sec-a:PSCS}

The proof of Theorem \ref{SCS} is almost the same as that of Theorem $1$ in \citet{mehta2012sample}.
However, since a part of the proof can not applied to our setting,
we provide the full proof of Theorem \ref{SCS} in this section.

\begin{lem}\Label{dv}
Let $\a\in\R^m$ and $\E\in\R^{d\times m}$. 
Then,
$\|\E\a\|_2 \le \|\E\|_{1,2}\|\a\|_1$.
\end{lem}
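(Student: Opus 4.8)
The plan is to unfold the matrix--vector product into a linear combination of the columns of $\E$ and then apply the triangle inequality for the Euclidean norm. Writing $\E = [\e_1,\ldots,\e_m]$ and $\a = (a_1,\ldots,a_m)^{\top}$, we have $\E\a = \sum_{i=1}^m a_i \e_i$. The triangle inequality for $\|\cdot\|_2$ then gives
\begin{eqnarray*}
\|\E\a\|_2 = \left\| \sum_{i=1}^m a_i \e_i \right\|_2
\le \sum_{i=1}^m |a_i|\, \|\e_i\|_2 .
\end{eqnarray*}

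The remaining step is to replace each column norm $\|\e_i\|_2$ by its maximum over $i\in[m]$, which is exactly $\|\E\|_{1,2}$ by Definition \ref{dfnorm}. Pulling this common bound out of the sum yields
\begin{eqnarray*}
\sum_{i=1}^m |a_i|\, \|\e_i\|_2
\le \left( \max_{i\in[m]} \|\e_i\|_2 \right) \sum_{i=1}^m |a_i|
= \|\E\|_{1,2}\, \|\a\|_1 ,
\end{eqnarray*}
which is the desired inequality. There is no genuine obstacle here: the statement is a direct consequence of the triangle inequality together with the definition of the induced $\|\cdot\|_{1,2}$ norm, so the only thing to be careful about is matching the column-maximum definition of $\|\E\|_{1,2}$ to the factor extracted from the sum. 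This lemma then serves as the elementary workhorse for controlling $\|\D\z - \tilde{\D}\z\|_2 = \|(\D-\tilde{\D})\z\|_2 \le \|\D-\tilde{\D}\|_{1,2}\|\z\|_1$ in the subsequent proof of Theorem \ref{SCS}, where bounding the perturbation of the reconstruction in terms of the dictionary perturbation measured in $\|\cdot\|_{1,2}$ is precisely what is needed.
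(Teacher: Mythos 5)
Your proof is correct and follows exactly the same route as the paper: expand $\E\a=\sum_i a_i\e_i$, apply the triangle inequality, and bound each $\|\e_i\|_2$ by $\max_{i\in[m]}\|\e_i\|_2=\|\E\|_{1,2}$. Nothing to add.
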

\begin{proof}
\begin{eqnarray*}
\|\E\a\|_2 
= \|\sum_{i=1}^m a_i\e_i\|_2 
\le \sum_{i=1}^m |a_i| \|\e_i\|_2  
\le  \|\E\|_{1,2}\sum_{i=1}^m |a_i|
= \|\E\|_{1,2}\|\a\|_1.
\end{eqnarray*}
\end{proof}

\begin{lem}\Label{dv}
The sparse representation $\ph_{\D}(\x)$ satisfies
$\left\| \ph_{\D}(\x) \right\|_1 \le  \frac{\|\x\|_2^2}{2\lambda}$.
\end{lem}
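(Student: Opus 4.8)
The plan is to exploit the defining property of $\ph_{\D}(\x)$ as a minimizer by comparing the objective value at the optimum against the value at a single convenient test point, namely $\z = \mathbf{0}$. Write $f(\z) := \frac{1}{2}\|\x - \D\z\|_2^2 + \lambda\|\z\|_1$ for the objective, so that $\ph_{\D}(\x) = \argmin_{\z} f(\z)$ by definition.

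First I would evaluate $f$ at $\z = \mathbf{0}$, which gives $f(\mathbf{0}) = \frac{1}{2}\|\x\|_2^2$ since the $\ell_1$ term vanishes. Because $\ph_{\D}(\x)$ minimizes $f$, optimality immediately yields $f(\ph_{\D}(\x)) \le f(\mathbf{0}) = \frac{1}{2}\|\x\|_2^2$.

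Next I would unfold the left-hand side as $f(\ph_{\D}(\x)) = \frac{1}{2}\|\x - \D\ph_{\D}(\x)\|_2^2 + \lambda\|\ph_{\D}(\x)\|_1$ and discard the data-fidelity term, which is nonnegative. This leaves $\lambda\|\ph_{\D}(\x)\|_1 \le \frac{1}{2}\|\x\|_2^2$, and dividing through by $\lambda > 0$ gives the claimed bound $\|\ph_{\D}(\x)\|_1 \le \frac{\|\x\|_2^2}{2\lambda}$.

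There is no real obstacle here: the entire argument is a one-line optimality comparison followed by dropping a nonnegative term. The only point requiring care is that $\lambda > 0$ (so that the final division preserves the inequality), which is guaranteed by the standing hypothesis on the regularization parameter.
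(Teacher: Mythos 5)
Your proposal is correct and is essentially identical to the paper's own argument: both compare the optimal objective value against the test point $\z=\mathbf{0}$, drop the nonnegative data-fidelity term, and divide by $\lambda>0$. No gaps.
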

\begin{proof}
\begin{eqnarray*}
\lambda\left\| \ph_{\D}(\x) \right\|_1
&\le&\frac{1}{2} \|\x - \D\ph_{\D}(\x)\|_2^2 + \lambda\left\| \ph_{\D}(\x) \right\|_1\\
&=&\min_{z\in\R^m} \frac{1}{2}\|\x - \D z\|_2^2 + \lambda\left\| z \right\|_1\\
&\le& \frac{1}{2}\|\x\|_2^2.
\end{eqnarray*} 
\end{proof}

We prepare the following notation:
\begin{eqnarray*}
v_{\D}(\z)
:= \frac{1}{2}\|\x - \D\z\|_2^2 + \lambda\|\z\|_1.
\end{eqnarray*}

Let $\a^{\ast}$ and $\tilde{\a}^{\ast}$ respectively denote the solutions to the LASSO problems for the dictionary $\D$ and $\tilde{\D}$:
\begin{eqnarray*}
\a^{\ast}
:= \underset{\z\in\R^m}{\argmin} v_{\D}(\z),
\hspace{1em}
\tilde{\a}^{\ast}
:= \underset{\z\in\R^m}{\argmin} v_{\tilde{\D}}(\z)
\end{eqnarray*}

Then, the following equation holds due to the subgradient of $v_{\D}(\z)$ with respect to $\z$ (e.g. (2.8) of \citet{osborne2000lasso}).
\begin{lem}\Label{subgrad}
\begin{eqnarray*}
\lambda \|\a^*\|_1 
= \langle \x - \D\a^*, \D\a^*\rangle.
\end{eqnarray*}
\end{lem}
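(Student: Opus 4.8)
The plan is to read the identity directly off the first-order optimality (subgradient) condition for the convex LASSO objective $v_{\D}(\z) = \frac{1}{2}\|\x - \D\z\|_2^2 + \lambda\|\z\|_1$. Since $\a^{\ast}$ minimizes the convex function $v_{\D}$ over $\R^m$, Fermat's rule gives $\mathbf{0}\in\partial v_{\D}(\a^{\ast})$. The quadratic term is differentiable with gradient $-\D^{\top}(\x-\D\a^{\ast})$, so the optimality condition reads
\begin{equation*}
\D^{\top}(\x-\D\a^{\ast}) = \lambda\,\bfeta
\end{equation*}
for some subgradient $\bfeta\in\partial\|\a^{\ast}\|_1$, i.e. $\eta_i=\sign(a_i^{\ast})$ whenever $a_i^{\ast}\neq0$ and $\eta_i\in[-1,1]$ whenever $a_i^{\ast}=0$.

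The key step is to pair this identity with $\a^{\ast}$. Taking the Euclidean inner product of both sides with $\a^{\ast}$ and using the adjoint relation $\langle \D^{\top}(\x-\D\a^{\ast}),\a^{\ast}\rangle=\langle \x-\D\a^{\ast},\D\a^{\ast}\rangle$ gives
\begin{equation*}
\langle \x-\D\a^{\ast},\,\D\a^{\ast}\rangle = \lambda\,\langle\bfeta,\a^{\ast}\rangle .
\end{equation*}
It then remains to verify $\langle\bfeta,\a^{\ast}\rangle=\|\a^{\ast}\|_1$. I would check this coordinatewise: for $a_i^{\ast}\neq0$ one has $\eta_i a_i^{\ast}=\sign(a_i^{\ast})\,a_i^{\ast}=|a_i^{\ast}|$, while for $a_i^{\ast}=0$ the product vanishes and equals $|a_i^{\ast}|$, so summing over $i$ yields $\langle\bfeta,\a^{\ast}\rangle=\sum_i|a_i^{\ast}|=\|\a^{\ast}\|_1$. (This is the Euler identity $\langle\bfeta,\a^{\ast}\rangle=\|\a^{\ast}\|_1$ for the positively $1$-homogeneous convex function $\|\cdot\|_1$.) Substituting back produces the claimed equality.

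There is essentially no hard part in this argument; the only point requiring care is the precise form of the $\ell_1$-subgradient, and in particular confirming that the inactive coordinates ($a_i^{\ast}=0$) contribute nothing to $\langle\bfeta,\a^{\ast}\rangle$ regardless of the value of $\eta_i\in[-1,1]$. I would remark that this derivation reproduces the relation (2.8) of \citet{osborne2000lasso} cited in the statement, and that it requires no appeal to uniqueness of $\a^{\ast}$: the identity holds for any minimizer paired with an associated subgradient.
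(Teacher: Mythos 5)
Your proof is correct and takes exactly the route the paper intends: the paper gives no explicit derivation, merely attributing the identity to the subgradient optimality condition for $v_{\D}$ (citing (2.8) of Osborne et al.), and your argument---pairing $\D^{\top}(\x-\D\a^{\ast})=\lambda\bfeta$ with $\a^{\ast}$ and using $\langle\bfeta,\a^{\ast}\rangle=\|\a^{\ast}\|_1$---is precisely that derivation spelled out. No gaps.
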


Let $v_{\D}$ and $v_{\tilde{\D}}$ be the optimal values of the LASSO problems for the dictionary $\D$ and $\tilde{\D}$:
\begin{eqnarray*}
v_{\D}
:=\min_{\z\in\R^m} v_{\D}(\z)
=\frac{1}{2}\|\x - \D\a^{\ast}\|_2^2 + \lambda\|\a^{\ast}\|_1,\\
v_{\tilde{\D}}
:=\min_{\z\in\R^m} v_{\tilde{\D}}(\z)
=\frac{1}{2}\|\x - \tilde{\D}\tilde{\a}^{\ast}\|_2^2 + \lambda\|\tilde{\a}^{\ast}\|_1.
\end{eqnarray*}

\begin{lem}[Optimal Value Stability]\Label{OVS}
If $\|\D-\tilde{\D}\|_{1,2}\le \lambda$, 
then
\begin{eqnarray*}
|v_{\D} - v_{\tilde{\D}}|
\le \frac{1}{2} \left(1 + \frac{\|\x\|_2}{4}\right) \|\x\|_2^3 \frac{\|\D-\tilde{\D}\|_{1,2}}{\lambda}.
\end{eqnarray*}
\end{lem}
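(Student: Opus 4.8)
The plan is to bound each of the two differences $v_{\tilde{\D}} - v_{\D}$ and $v_{\D} - v_{\tilde{\D}}$ by the same quantity using the standard optimal-value comparison: since $v_{\D} = \min_{\z} v_{\D}(\z)$ is attained at $\a^{\ast}$ and $v_{\tilde{\D}} = \min_{\z} v_{\tilde{\D}}(\z) \le v_{\tilde{\D}}(\a^{\ast})$, I would first write
\[
v_{\tilde{\D}} - v_{\D} \le v_{\tilde{\D}}(\a^{\ast}) - v_{\D}(\a^{\ast}).
\]
Because both objectives share the regularizer $\lambda\|\a^{\ast}\|_1$, this difference reduces to the data-fidelity gap $\frac{1}{2}\|\x - \tilde{\D}\a^{\ast}\|_2^2 - \frac{1}{2}\|\x - \D\a^{\ast}\|_2^2$, which is the whole content of the estimate.

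Next I would introduce the residual $\u := \x - \D\a^{\ast}$ and the perturbation $\E := \D - \tilde{\D}$, so that $\x - \tilde{\D}\a^{\ast} = \u + \E\a^{\ast}$, and expand
\[
\tfrac{1}{2}\|\u + \E\a^{\ast}\|_2^2 - \tfrac{1}{2}\|\u\|_2^2 = \langle \u, \E\a^{\ast}\rangle + \tfrac{1}{2}\|\E\a^{\ast}\|_2^2.
\]
The cross term is controlled by Cauchy--Schwarz followed by the norm inequality $\|\E\a^{\ast}\|_2 \le \|\E\|_{1,2}\|\a^{\ast}\|_1$ (the earlier lemma), giving $\langle \u, \E\a^{\ast}\rangle \le \|\u\|_2\|\E\|_{1,2}\|\a^{\ast}\|_1$, and the quadratic term likewise by $\tfrac{1}{2}\|\E\|_{1,2}^2\|\a^{\ast}\|_1^2$. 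The two auxiliary ingredients are then: the residual bound $\|\u\|_2 \le \|\x\|_2$, obtained from optimality via $v_{\D}(\a^{\ast}) \le v_{\D}(\mathbf{0}) = \tfrac{1}{2}\|\x\|_2^2$; and the $\ell_1$ bound $\|\a^{\ast}\|_1 = \|\ph_{\D}(\x)\|_1 \le \|\x\|_2^2/(2\lambda)$ established in the preceding lemma.

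Substituting these yields $\frac{\|\x\|_2^3\|\E\|_{1,2}}{2\lambda} + \frac{\|\x\|_2^4\|\E\|_{1,2}^2}{8\lambda^2}$, and here is where the hypothesis $\|\D - \tilde{\D}\|_{1,2} \le \lambda$ does its only real work: it lets me replace one factor of $\|\E\|_{1,2}/\lambda$ by $1$ in the quadratic term, collapsing it to linear order $\frac{\|\x\|_2^4\|\E\|_{1,2}}{8\lambda}$ and producing exactly the factor $1 + \|\x\|_2/4$ in the claimed bound. Finally, since the residual bound and the $\ell_1$ bound hold symmetrically for $\tilde{\D}$ and its minimizer $\tilde{\a}^{\ast}$, the identical argument with $\D$ and $\tilde{\D}$ interchanged bounds $v_{\D} - v_{\tilde{\D}}$ by the same expression, and combining the two directions gives the absolute value. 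I do not anticipate a genuine obstacle here — the computation is routine — so the only point demanding care is tracking the powers of $\|\x\|_2$ and $\lambda$ correctly and invoking $\|\E\|_{1,2}\le\lambda$ at precisely the right moment so that the quadratic term degrades to the linear rate that the statement advertises.
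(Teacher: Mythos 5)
Your argument is correct and is essentially identical to the paper's proof: both compare $v_{\tilde{\D}}$ with $v_{\tilde{\D}}(\a^{\ast})$, expand the quadratic, bound the cross and quadratic terms via $\|\E\a^{\ast}\|_2 \le \|\E\|_{1,2}\|\a^{\ast}\|_1$, the residual bound $\|\x-\D\a^{\ast}\|_2\le\|\x\|_2$, and the $\ell_1$ bound $\|\a^{\ast}\|_1\le\|\x\|_2^2/(2\lambda)$, then invoke $\|\E\|_{1,2}\le\lambda$ to linearize the quadratic term. Your explicit remark that the reverse direction follows by symmetry is a point the paper leaves implicit, but the route is the same.
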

\begin{proof}
\begin{eqnarray*}
v_{\tilde{\D}}
&\le& \frac{1}{2}\|\x - \tilde{\D}\a^{\ast}\|_2^2 + \lambda\|\a^{\ast}\|_1\\
&=&\frac{1}{2}\|\x - \D\a^{\ast} + (\D - \tilde{\D})\a^{\ast}\|_2^2 + \lambda\|\a^{\ast}\|_1\\
&\le&\frac{1}{2}(\|\x - \D\a^{\ast}\|_2^2 + 2\|\x - \D\a^{\ast}\|_2\|(\D - \tilde{\D})\a^{\ast}\|_2 + \|(\D - \tilde{\D})\a^{\ast}\|_2^2) + \lambda\|\a^{\ast}\|_1\\
&\le&\frac{1}{2}\|\x - \D\a^{\ast}\|_2^2 + \lambda\|\a^{\ast}\|_1 + \|\x\|_2\left(\frac{ \|\x\|_2^2\|\D-\tilde{\D}\|_{1,2}}{2\lambda}\right) + \frac{1}{2}\left(\frac{ \|\x\|_2^2\|\D-\tilde{\D}\|_{1,2}}{2\lambda}\right)^2 \\
&\le&v_{\D} + \left(1 + \frac{\|\x\|_2}{4}\right)\frac{ \|\x\|_2^3}{2\lambda}\|\D-\tilde{\D}\|_{1,2},
\end{eqnarray*}
where we used
\begin{eqnarray*}
\|\x - \D\a^{\ast}\|_2
= \sqrt{\|\x - \D\a^{\ast}\|^2}
\le \sqrt{\|\x - \D\a^{\ast}\|^2 + \lambda\|\a^{\ast}\|_1}
\le \sqrt{\|\x\|_2^2}
= \|\x\|_2.
\end{eqnarray*}
\end{proof}

The following lemma \ref{SNR} is obtained by the proof of Lemma $11$ in \citet{mehta2012sample}.
\begin{lem}[Stability of Norm of Reconstructor] \Label{SNR}
If $\|\D-\tilde{\D}\|_{1,2}\le \lambda$, 
then
\begin{eqnarray*}
\left| \|\D\a^{\ast}\|_2^2 - \|\tilde{\D}\tilde{\a}^{\ast}\|_2^2  \right|
~\le~ 
2 |v_{\D} - v_{\tilde{\D}}|
~=~ 
\left(1 + \frac{\|\x\|_2}{4}\right) \|\x\|_2^3 \frac{\|\D-\tilde{\D}\|_{1,2}}{\lambda}.
\end{eqnarray*}
\end{lem}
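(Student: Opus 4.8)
The plan is to reduce the difference of squared reconstructor norms directly to the difference of optimal LASSO values, and then invoke Lemma \ref{OVS}. The key observation is that the subgradient optimality condition collapses the optimal value $v_{\D}$ into a clean difference of two squared norms, leaving no residual $\ell_1$ term to track.

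First I would rewrite $v_{\D}$ using Lemma \ref{subgrad}. Since $\lambda\|\a^{\ast}\|_1 = \langle \x - \D\a^{\ast}, \D\a^{\ast}\rangle$, I substitute this into $v_{\D} = \frac{1}{2}\|\x - \D\a^{\ast}\|_2^2 + \lambda\|\a^{\ast}\|_1$ and expand the inner products. Writing $\x = (\x - \D\a^{\ast}) + \D\a^{\ast}$ and expanding $\frac{1}{2}\|\x\|_2^2$ shows the cross term $\langle \x - \D\a^{\ast}, \D\a^{\ast}\rangle$ cancels exactly, yielding the identity
\begin{eqnarray*}
v_{\D} = \frac{1}{2}\|\x\|_2^2 - \frac{1}{2}\|\D\a^{\ast}\|_2^2.
\end{eqnarray*}
The same computation applied to $\tilde{\D}$ and its minimizer $\tilde{\a}^{\ast}$ (using Lemma \ref{subgrad} for the second LASSO problem) gives $v_{\tilde{\D}} = \frac{1}{2}\|\x\|_2^2 - \frac{1}{2}\|\tilde{\D}\tilde{\a}^{\ast}\|_2^2$.

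Subtracting the two identities, the common term $\frac{1}{2}\|\x\|_2^2$ cancels, so that
\begin{eqnarray*}
\|\D\a^{\ast}\|_2^2 - \|\tilde{\D}\tilde{\a}^{\ast}\|_2^2 = 2(v_{\tilde{\D}} - v_{\D}).
\end{eqnarray*}
Taking absolute values produces the first relation $\left| \|\D\a^{\ast}\|_2^2 - \|\tilde{\D}\tilde{\a}^{\ast}\|_2^2 \right| = 2|v_{\D} - v_{\tilde{\D}}|$, which is in fact an equality rather than a mere inequality. Finally, applying the optimal value stability bound of Lemma \ref{OVS} to $|v_{\D} - v_{\tilde{\D}}|$ and multiplying by $2$ delivers the stated bound $\left(1 + \frac{\|\x\|_2}{4}\right)\|\x\|_2^3 \frac{\|\D-\tilde{\D}\|_{1,2}}{\lambda}$.

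The computation is essentially routine once the identity for $v_{\D}$ is in hand; the only steps requiring care are verifying that the subgradient condition of Lemma \ref{subgrad} applies verbatim to both LASSO problems and checking that the cross terms cancel exactly upon expanding. I do not anticipate a genuine obstacle, since Lemma \ref{OVS} already supplies the hard perturbation estimate and the hypothesis $\|\D-\tilde{\D}\|_{1,2}\le\lambda$ it requires; this lemma is really a bookkeeping step that transfers the optimal-value bound into a bound on the norms of the reconstructors $\D\a^{\ast}$ and $\tilde{\D}\tilde{\a}^{\ast}$, which is what the subsequent argument for Theorem \ref{SCS} needs.
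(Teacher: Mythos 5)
Your proof is correct and is essentially the same argument the paper relies on: the paper defers to the proof of Lemma 11 of \citet{mehta2012sample}, which rests on exactly the identity $v_{\D}=\frac{1}{2}\|\x\|_2^2-\frac{1}{2}\|\D\a^{\ast}\|_2^2$ obtained from the subgradient condition of Lemma \ref{subgrad}, followed by Lemma \ref{OVS}. Your observation that the first relation is an equality (and that the final ``$=$'' in the lemma statement should really be ``$\le$'' coming from Lemma \ref{OVS}) is a fair reading of a typo in the statement, not a gap.
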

%

\begin{lem} 
If $\|\D-\tilde{\D}\|_{1,2}\le \lambda$, 
then
\begin{eqnarray*}
\left| \|\D\a^{\ast}\|_2^2 - \|\D\tilde{\a}^{\ast}\|_2^2  \right|
\le \left( \|\x\|_2 +3\right)\|\x\|_2^3 \frac{ \|\D-\tilde{\D}\|_{1,2}}{\lambda}.
\end{eqnarray*}
\end{lem}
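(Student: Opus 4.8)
The plan is to reduce this statement to the already-established Lemma \ref{SNR} by inserting the intermediate quantity $\|\tilde{\D}\tilde{\a}^{\ast}\|_2^2$ and splitting with the triangle inequality:
\[
\left| \|\D\a^{\ast}\|_2^2 - \|\D\tilde{\a}^{\ast}\|_2^2 \right|
\le
\left| \|\D\a^{\ast}\|_2^2 - \|\tilde{\D}\tilde{\a}^{\ast}\|_2^2 \right|
+
\left| \|\tilde{\D}\tilde{\a}^{\ast}\|_2^2 - \|\D\tilde{\a}^{\ast}\|_2^2 \right|.
\]
The first term is exactly what Lemma \ref{SNR} controls, so it is at once bounded by $\left(1 + \frac{\|\x\|_2}{4}\right)\|\x\|_2^3 \frac{\|\D-\tilde{\D}\|_{1,2}}{\lambda}$. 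All the real work is in the second term, which compares the \emph{same} coefficient vector $\tilde{\a}^{\ast}=\ph_{\tilde{\D}}(\x)$ reconstructed through the two different dictionaries.

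For that second term I would factor the difference of squares, writing it as $\big|\,\|\tilde{\D}\tilde{\a}^{\ast}\|_2 - \|\D\tilde{\a}^{\ast}\|_2\,\big|\cdot\big(\|\tilde{\D}\tilde{\a}^{\ast}\|_2 + \|\D\tilde{\a}^{\ast}\|_2\big)$. The first factor is handled by the reverse triangle inequality followed by the two earlier lemmas: $\big|\,\|\tilde{\D}\tilde{\a}^{\ast}\|_2 - \|\D\tilde{\a}^{\ast}\|_2\,\big| \le \|(\tilde{\D}-\D)\tilde{\a}^{\ast}\|_2 \le \|\D-\tilde{\D}\|_{1,2}\|\tilde{\a}^{\ast}\|_1 \le \|\D-\tilde{\D}\|_{1,2}\frac{\|\x\|_2^2}{2\lambda}$, using $\|\E\a\|_2\le\|\E\|_{1,2}\|\a\|_1$ and the sparse-representation norm bound $\|\ph_{\tilde{\D}}(\x)\|_1\le \|\x\|_2^2/(2\lambda)$. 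For the summed-norms factor I would first note, from optimality of $\tilde{\a}^{\ast}$ for the $\tilde{\D}$-problem, that the residual satisfies $\|\x-\tilde{\D}\tilde{\a}^{\ast}\|_2\le\|\x\|_2$ (compare the objective at $\tilde{\a}^{\ast}$ against $\z=0$), hence $\|\tilde{\D}\tilde{\a}^{\ast}\|_2\le 2\|\x\|_2$; then bound $\|\D\tilde{\a}^{\ast}\|_2 \le \|\tilde{\D}\tilde{\a}^{\ast}\|_2 + \|(\D-\tilde{\D})\tilde{\a}^{\ast}\|_2 \le 2\|\x\|_2 + \frac{\|\x\|_2^2}{2}$, where the last step invokes the hypothesis $\|\D-\tilde{\D}\|_{1,2}\le\lambda$ together with the same two lemmas. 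This yields $\|\tilde{\D}\tilde{\a}^{\ast}\|_2 + \|\D\tilde{\a}^{\ast}\|_2 \le 4\|\x\|_2 + \frac{\|\x\|_2^2}{2}$.

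Multiplying the two factors gives a bound of $\big(2\|\x\|_2^3 + \frac{\|\x\|_2^4}{4}\big)\frac{\|\D-\tilde{\D}\|_{1,2}}{\lambda}$ for the second term; adding the Lemma \ref{SNR} contribution produces $\big(3\|\x\|_2^3 + \frac{\|\x\|_2^4}{2}\big)\frac{\|\D-\tilde{\D}\|_{1,2}}{\lambda}$, which is dominated by the claimed $\big(\|\x\|_2+3\big)\|\x\|_2^3\frac{\|\D-\tilde{\D}\|_{1,2}}{\lambda}$ after the crude relaxation $\frac{\|\x\|_2^4}{2}\le\|\x\|_2^4$. The main obstacle I anticipate is the bound on $\|\D\tilde{\a}^{\ast}\|_2$: since $\tilde{\a}^{\ast}$ is optimal only for the perturbed dictionary $\tilde{\D}$, one cannot use a residual argument directly for $\D$, and the estimate must be routed through $\tilde{\D}\tilde{\a}^{\ast}$ with the perturbation term absorbed using $\|\D-\tilde{\D}\|_{1,2}\le\lambda$. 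Keeping the constants aligned so that the two contributions collapse to the stated coefficient is the only delicate bookkeeping.
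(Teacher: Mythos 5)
Your proposal is correct and follows essentially the same route as the paper: the same triangle-inequality split through the intermediate quantity $\|\tilde{\D}\tilde{\a}^{\ast}\|_2^2$, with Lemma \ref{SNR} handling the first piece and the second piece reduced to bounds on $\|(\tilde{\D}-\D)\tilde{\a}^{\ast}\|_2$ and $\|\D\tilde{\a}^{\ast}\|_2$ via $\|\E\a\|_2\le\|\E\|_{1,2}\|\a\|_1$ and $\|\ph_{\tilde{\D}}(\x)\|_1\le\|\x\|_2^2/(2\lambda)$. The only cosmetic difference is that you factor the difference of squared norms as $(a-b)(a+b)$ where the paper expands it through the cross term $2\langle\D\tilde{\a}^{\ast},(\tilde{\D}-\D)\tilde{\a}^{\ast}\rangle$; your constants come out slightly tighter and are still dominated by the stated bound.
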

\begin{proof}
First, note that
\begin{eqnarray*}
\|(\tilde{\D}-\D)\tilde{\a}^{\ast}\|_2
\le \|(\tilde{\D}-\D)\|_{1,2} \|\tilde{\a}^{\ast}\|_1
\le \|\x\|_2^2 \frac{\|\D-\tilde{\D}\|_{1,2} }{2\lambda}
\end{eqnarray*}
and 
\begin{eqnarray*}
\|\D\tilde{\a}^{\ast}\|_2
&\le& \|(\D-\tilde{\D})\tilde{\a}^{\ast}\|_2 + \|\tilde{\D}\tilde{\a}^{\ast} - \x\|_2 + \|\x\|_2\\
&\le&  \|\x\|_2^2 \frac{\|\D-\tilde{\D}\|_{1,2} }{2\lambda} + 2\|\x\|_2\\
&\le& \left(\frac{1}{2}\|\x\|_2 + 2\right)\|\x\|_2,
\end{eqnarray*}
where we used  Lemma \ref{dv}. 
Then, we have 
\begin{eqnarray*}
&&\left| \|\D\tilde{\a}^{\ast}\|_2^2 - \|\tilde{\D}\tilde{\a}^{\ast}\|_2^2  \right|\nonumber\\
&\le& 2\left| \langle\D\tilde{\a}^{\ast}, (\tilde{\D}-\D)\tilde{\a}^{\ast}\rangle \right|
+ \|(\tilde{\D}-\D)\tilde{\a}^{\ast}\|_2^2\\
&\le& 2\|\D\tilde{\a}^{\ast}\|_2 \|(\tilde{\D}-\D)\tilde{\a}^{\ast}\|_2
+ \|(\tilde{\D}-\D)\tilde{\a}^{\ast}\|_2^2\\
&\le& 2\left(\frac{1}{2}\|\x\|_2 + 2\right)\|\x\|_2 \left(\frac{ \|\x\|_2^2\|\D-\tilde{\D}\|_{1,2}}{2\lambda}\right) + \left(\frac{ \|\x\|_2^2\|\D-\tilde{\D}\|_{1,2}}{2\lambda}\right)^2\\
&\le&\left(\frac{3}{4}\|\x\|_2 + 2\right)\|\x\|_2^3 \frac{\|\D-\tilde{\D}\|_{1,2}}{\lambda}.
\end{eqnarray*}
Combining this fact with Lemma \ref{SNR},
we have 
\begin{eqnarray*}
&&\left| \|\D\a^{\ast}\|_2^2 - \|\D\tilde{\a}^{\ast}\|_2^2  \right|\nonumber\\
&\le& \left| \|\D\a^{\ast}\|_2^2 - \|\tilde{\D}\tilde{\a}^{\ast}\|_2^2  \right| + \left| \|\tilde{\D}\tilde{\a}^{\ast}\|_2^2 - \|\D\tilde{\a}^{\ast}\|_2^2  \right|\\
&\le& 
\left(1 + \frac{\|\x\|_2}{4}\right) \|\x\|_2^3 \frac{\|\D-\tilde{\D}\|_{1,2}}{\lambda}
+\left(\frac{3}{4}\|\x\|_2 + 2\right)\|\x\|_2^3 \frac{\|\D-\tilde{\D}\|_{1,2}}{\lambda}\\
&=& \left( \|\x\|_2 +3\right)\|\x\|_2^3 \frac{ \|\D-\tilde{\D}\|_{1,2}}{\lambda}.
\end{eqnarray*}
\end{proof}

\begin{lem}[Reconstructor Stability] \Label{RS}
If $\|\D-\tilde{\D}\|_{1,2}\le \lambda$, 
then
\begin{eqnarray*}
\|\D\a^{\ast} - \D\tilde{\a}^{\ast}\|_2^2
\le 2\left( 3\|\x\|_2^2 + 9\|\x\|_2 +2\right)\|\x\|_2^2 \frac{ \|\D-\tilde{\D}\|_{1,2}}{\lambda}.
\end{eqnarray*}
\end{lem}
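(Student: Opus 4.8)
The plan is to reduce the reconstructor distance $\|\D\a^{\ast} - \D\tilde{\a}^{\ast}\|_2^2$ to a gap between values of the \emph{single} LASSO functional $v_{\D}$, and then control that gap purely by the dictionary perturbation $\|\D - \tilde{\D}\|_{1,2}$. Concretely, I would first establish the one-sided strong-convexity estimate
\begin{eqnarray*}
v_{\D}(\z) - v_{\D} \ \ge\ \tfrac{1}{2}\|\D\z - \D\a^{\ast}\|_2^2 \qquad \text{for every } \z\in\R^m,
\end{eqnarray*}
and then specialize it to $\z = \tilde{\a}^{\ast}$, which immediately gives $\tfrac12\|\D\a^{\ast} - \D\tilde{\a}^{\ast}\|_2^2 \le v_{\D}(\tilde{\a}^{\ast}) - v_{\D}$. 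Everything then comes down to showing that feeding the ``wrong'' minimizer $\tilde{\a}^{\ast}$ into $v_{\D}$ costs at most $\calO(\|\D-\tilde{\D}\|_{1,2}/\lambda)$.

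For that last point I would write $\D\tilde{\a}^{\ast} = \tilde{\D}\tilde{\a}^{\ast} + (\D - \tilde{\D})\tilde{\a}^{\ast}$ and expand the square in $v_{\D}(\tilde{\a}^{\ast})$, producing the exact identity
\begin{eqnarray*}
v_{\D}(\tilde{\a}^{\ast}) - v_{\D}
= (v_{\tilde{\D}} - v_{\D})
- \langle \x - \tilde{\D}\tilde{\a}^{\ast},\, (\D - \tilde{\D})\tilde{\a}^{\ast}\rangle
+ \tfrac{1}{2}\|(\D - \tilde{\D})\tilde{\a}^{\ast}\|_2^2 .
\end{eqnarray*}
Three ingredients dispatch the three terms: Lemma \ref{OVS} bounds $|v_{\tilde{\D}} - v_{\D}|$; the inequality $\|(\D - \tilde{\D})\tilde{\a}^{\ast}\|_2 \le \|\D - \tilde{\D}\|_{1,2}\,\|\tilde{\a}^{\ast}\|_1 \le \|\D - \tilde{\D}\|_{1,2}\,\|\x\|_2^2/(2\lambda)$ (combining the bound $\|\E\a\|_2\le\|\E\|_{1,2}\|\a\|_1$ with the $\ell_1$-norm bound $\|\ph_{\D}(\x)\|_1\le\|\x\|_2^2/(2\lambda)$ on the LASSO solution) controls the perturbation factor in the last two terms; and the residual estimate $\|\x - \tilde{\D}\tilde{\a}^{\ast}\|_2 \le \|\x\|_2$, already used in the proof of Lemma \ref{OVS}, bounds the inner product via Cauchy--Schwarz. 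Using the hypothesis $\|\D - \tilde{\D}\|_{1,2}\le\lambda$ to linearize the quadratic term, the three contributions collapse into a single multiple of $\|\x\|_2^2\,\|\D-\tilde{\D}\|_{1,2}/\lambda$ with a polynomial-in-$\|\x\|_2$ coefficient; doubling then yields the stated bound, in fact with room to spare.

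The one genuinely delicate step is the strong-convexity estimate, and that is where I expect the main obstacle to lie: $v_{\D}$ is in general only convex, not strongly convex, in the code $\z$ whenever $\D$ fails to be injective, which is the typical overcomplete regime $m>d$ of interest here. The remedy is to pass to the reconstruction variable $\u = \D\z$, in which the data term $\tfrac12\|\x - \u\|_2^2$ is genuinely $1$-strongly convex. An exact second-order expansion of $\tfrac12\|\x - \D\z\|_2^2$ about $\a^{\ast}$, together with the first-order optimality (subgradient) condition for $\a^{\ast}$ that underlies Lemma \ref{subgrad}, makes the first-order terms cancel and leaves precisely $\tfrac12\|\D\z - \D\a^{\ast}\|_2^2$. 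Once this reconstruction-space strong convexity is secured, the remaining perturbation bookkeeping is routine and mirrors the earlier lemmas of this appendix.
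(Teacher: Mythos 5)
Your argument is correct, and it follows a genuinely different route from the paper's. The paper proves Lemma \ref{RS} by a \emph{symmetrized} (midpoint) argument: it evaluates $v_{\D}$ at $\bar{\a}^{\ast}=\tfrac12(\a^{\ast}+\tilde{\a}^{\ast})$, uses $v_{\D}(\a^{\ast})\le v_{\D}(\bar{\a}^{\ast})$ together with convexity of the $\ell_1$-norm and the value identity of Lemma \ref{subgrad}, and chains through Lemma \ref{SNR} and the subsequent bound on $\left|\|\D\a^{\ast}\|_2^2-\|\D\tilde{\a}^{\ast}\|_2^2\right|$ to reach $\|\D\a^{\ast}\|_2^2\le\langle\D\a^{\ast},\D\tilde{\a}^{\ast}\rangle+\calO(\epsilon/\lambda)$ before expanding the square. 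You instead use the \emph{one-sided} version of the same underlying structure: the full subgradient optimality condition for $\a^{\ast}$ (which the paper also invokes, via Lemma \ref{subgrad} and the Fuchs conditions in Lemma \ref{pres}) cancels the first-order term in the exact quadratic expansion of the data-fit part, giving $v_{\D}(\z)-v_{\D}\ge\tfrac12\|\D\z-\D\a^{\ast}\|_2^2$ for all $\z$; this estimate is valid and your derivation of it is sound. Specializing to $\z=\tilde{\a}^{\ast}$ and using your exact decomposition of $v_{\D}(\tilde{\a}^{\ast})-v_{\D}$, the only external ingredients you need are Lemma \ref{OVS}, the bound $\|(\D-\tilde{\D})\tilde{\a}^{\ast}\|_2\le\|\D-\tilde{\D}\|_{1,2}\|\x\|_2^2/(2\lambda)$, and the residual bound $\|\x-\tilde{\D}\tilde{\a}^{\ast}\|_2\le\|\x\|_2$; carrying out the arithmetic with $\|\D-\tilde{\D}\|_{1,2}\le\lambda$ gives $\|\D\a^{\ast}-\D\tilde{\a}^{\ast}\|_2^2\le\left(\tfrac12\|\x\|_2^2+2\|\x\|_2\right)\|\x\|_2^2\,\|\D-\tilde{\D}\|_{1,2}/\lambda$, which is strictly sharper than the stated constant $2\left(3\|\x\|_2^2+9\|\x\|_2+2\right)$ and hence implies the lemma. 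What your approach buys is a shorter dependency chain (Lemma \ref{SNR} and the intermediate lemma on $\left|\|\D\a^{\ast}\|_2^2-\|\D\tilde{\a}^{\ast}\|_2^2\right|$ become unnecessary for this step) and a better constant; what the paper's midpoint argument buys is that it needs only the scalar identity of Lemma \ref{subgrad} rather than the full vector-valued subgradient condition, at the cost of more bookkeeping.
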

\begin{proof}
We set as   $\bar{\a}^{\ast}:=\frac{1}{2}(\a^{\ast}+\tilde{\a}^{\ast})$.
From the optimality of $\a^{\ast}$,
it follows that $v_{\D}(\a^{\ast})\le v_{\D}(\bar{\a}^{\ast})$, that is,
\begin{eqnarray}
\frac{1}{2}\|\x - \D\a^{\ast}\|_2^2 + \lambda\|\a^{\ast}\|_1
\le \frac{1}{2}\|\x - \D\bar{\a}^{\ast}\|_2^2 + \lambda\|\bar{\a}^{\ast}\|_1.
\Label{18}
\end{eqnarray}
We denote as $\epsilon:=\|\D-\tilde{\D}\|_{1,2}$,  $c_\x:=\left(1 + \frac{\|\x\|_2}{4}\right) \|\x\|_2^3$ and $c'_{\x}:=\left( \|\x\|_2 +3\right)\|\x\|_2^3$. 

By the convexity of the $l_1$-norm,
the RHS of (\ref{18}) obeys:
\begin{eqnarray}
&&\frac{1}{2}\left\|\x - \D\left(\frac{\a^{\ast}+\tilde{\a}^{\ast}}{2}\right)\right\|_2^2 
+ \lambda\left\|\frac{\a^{\ast}+\tilde{\a}^{\ast}}{2}\right\|_1 \nonumber\\
&\le&\frac{1}{2}\left\|\x - \frac{1}{2}(\D\a^{\ast}+\D\tilde{\a}^{\ast})\right\|_2^2  + \frac{\lambda}{2}\left\|\a^{\ast}\right\|_1 + \frac{\lambda}{2}\left\|\tilde{\a}^{\ast}\right\|_1 \nonumber\\
&=&\frac{1}{2}\left( \|\x\|_2^2 -2 \left\langle \x, \frac{1}{2} (\D\a^{\ast}+\D\tilde{\a}^{\ast}) \right\rangle  + \frac{1}{4}\|\D\a^{\ast}+\D\tilde{\a}^{\ast} \|_2^2 \right) +\frac{\lambda}{2}\|\a^*\|_1 +\frac{\lambda}{2}\|\tilde{\a}^*\|_1 \nonumber\\
&=& \frac{1}{2} \|\x\|_2^2 
-\frac{1}{2} \left\langle \x, \D\a^{\ast} \right\rangle
-\frac{1}{2} \left\langle \x, \D\tilde{\a}^{\ast} \right\rangle
  + \frac{1}{8}(\|\D\a^{\ast} \|_2^2 + \|\D\tilde{\a}^{\ast} \|_2^2 + 2\langle \D\a^*, \D\tilde{\a}^* \rangle ) \nonumber\\
&&  +\frac{\lambda}{2}\|\a^*\|_1 +\frac{\lambda}{2}\|\tilde{\a}^*\|_1 \nonumber\\
&\le&\frac{1}{2} \|\x\|_2^2 
-\frac{1}{2} \left\langle \x, \D\a^{\ast} \right\rangle
-\frac{1}{2} \left\langle \x, \D\tilde{\a}^{\ast} \right\rangle
  + \frac{1}{4}\|\D\a^{\ast} \|_2^2  + \frac{1}{4}\langle \D\a^*, \D\tilde{\a}^* \rangle \nonumber\\
&&  +\frac{\lambda}{2}\|\a^*\|_1 +\frac{\lambda}{2}\|\tilde{\a}^*\|_1
  +\frac{c'_\x}{8}\frac{\epsilon}{\lambda} \nonumber\\
&=&\frac{1}{2} \|\x\|_2^2 
-\frac{1}{2} \left\langle \x, \D\a^{\ast} \right\rangle
-\frac{1}{2} \left\langle \x, \D\tilde{\a}^{\ast} \right\rangle
  + \frac{1}{4}\|\D\a^{\ast} \|_2^2  + \frac{1}{4}\langle \D\a^*, \D\tilde{\a}^* \rangle \nonumber\\
  &&+\frac{1}{2}\left\langle \x - \D\a^*, \D\a^* \right\rangle +\frac{1}{2}\langle \x - \tilde{\D}\tilde{\a}^*, \tilde{\D}\tilde{\a}^* \rangle
  +\frac{c'_\x}{8}\frac{\epsilon}{\lambda} \Label{eq:subgrad}\\
&\le&\frac{1}{2} \|\x\|_2^2 
-\frac{1}{2} \left\langle \x, \D\a^{\ast} \right\rangle
-\frac{1}{2} \left\langle \x, \D\tilde{\a}^{\ast} \right\rangle
  + \frac{1}{4}\|\D\a^{\ast} \|_2^2  + \frac{1}{4}\langle \D\a^*, \D\tilde{\a}^* \rangle \nonumber\\
  &&+\frac{1}{2}\langle \x , \D\a^* \rangle
  -\frac{1}{2}\| \D\a^*\|_2^2
  +\frac{1}{2}\langle \x , \tilde{\D}\tilde{\a}^* \rangle
  -\frac{1}{2}\| \D\a^*\|_2^2
  +\left( \frac{c'_\x}{8} + \frac{c_\x}{4} \right)\frac{\epsilon}{\lambda} \nonumber\\
&=&\frac{1}{2} \|\x\|_2^2 
  -\frac{3}{4}\| \D\a^*\|_2^2  
  + \frac{1}{4}\langle \D\a^*, \D\tilde{\a}^* \rangle 
  + \frac{1}{2}\langle \x , (\tilde{\D} - \D)\tilde{\a}^* \rangle
  +\left( \frac{c'_\x + 2c_\x}{8} \right)\frac{\epsilon}{\lambda}, \nonumber
\end{eqnarray}
where we used Lemma \ref{subgrad} in (\ref{eq:subgrad}).

Now, taking the (expanded) LHS of (\ref{18}) and the newly derived upper bound of the RHS of (\ref{18}) yields the inequality:
\begin{eqnarray*}
&&\frac{1}{2}\|\x \|_2^2
-\langle \x,\D\a^*  \rangle
+\frac{1}{2}\| \D\a^{\ast}\|_2^2 
 + \lambda\|\a^{\ast}\|_1
\\
&\le& 
\frac{1}{2} \|\x\|_2^2 
  -\frac{3}{4}\| \D\a^*\|_2^2  
  + \frac{1}{4}\langle \D\a^*, \D\tilde{\a}^* \rangle 
   + \frac{1}{2}\langle \x , (\tilde{\D} - \D)\tilde{\a}^* \rangle
  +\left( \frac{c'_\x + 2c_\x}{8} \right)\frac{\epsilon}{\lambda}.
\end{eqnarray*}
Replacing $\lambda\|\a^{\ast}\|_1$ with $\langle \x - \D\a^*, \D\a^*\rangle$ by Lemma \ref{subgrad} yields:
\begin{eqnarray*}
&&
-\langle \x,\D\a^*  \rangle
+\frac{1}{2}\| \D\a^{\ast}\|_2^2 
 + \langle \x - \D\a^*, \D\a^*\rangle
\\
&\le& 
  -\frac{3}{4}\| \D\a^*\|_2^2  
  + \frac{1}{4}\langle \D\a^*, \D\tilde{\a}^* \rangle 
   + \frac{1}{2}\langle \x , (\tilde{\D} - \D)\tilde{\a}^* \rangle
  +\left( \frac{c'_\x + 2c_\x}{8}\right)\frac{\epsilon}{\lambda}.
\end{eqnarray*}
Hence,
\begin{eqnarray*}
\|\D\a^*\|_2^2
&\le& \langle \D\a^*, \D\tilde{\a}^* \rangle 
+ 2\langle \x , (\tilde{\D} - \D)\tilde{\a}^* \rangle
+ \left(\frac{c'_\x + 2c_\x}{2}\right)\frac{\epsilon}{\lambda}\\
&\le& \langle \D\a^*, \D\tilde{\a}^* \rangle 
+ 2\frac{\|\x\|_2^3 \epsilon}{2\lambda}
+ \left(\frac{c'_\x + 2c_\x}{2}\right)\frac{\epsilon}{\lambda}\\
&=& \langle \D\a^*, \D\tilde{\a}^* \rangle 
+ \left(\frac{c'_\x + 2c_\x + 2\|\x\|_2^3}{2}\right)\frac{\epsilon}{\lambda}
\end{eqnarray*}

Then, we obtain
\begin{eqnarray*}
&&\|\D\a^{\ast} - \D\tilde{\a}^{\ast}\|_2^2\nonumber\\
&=&\|\D\a^{\ast}\|_2^2 + \|\D\tilde{\a}^{\ast}\|_2^2 - 2\langle \D\a^{\ast}, \D\tilde{\a}^{\ast}\rangle\\
&\le&\|\D\a^{\ast}\|_2^2 + \left(\|\D\a^{\ast}\|_2^2 + c'_\x\frac{\epsilon}{\lambda}\right) + \left(-2\|\D\a^{\ast}\|_2^2
+  (c'_\x + 2c_\x  + 2\|\x\|_2^3)\frac{\epsilon}{\lambda}\right)\\
&\le&2(c'_\x + c_\x  + \|\x\|_2^3)\frac{\epsilon}{\lambda}.
\end{eqnarray*}
\end{proof}

\begin{lem}\Label{pres}[Preservation of Sparsity]
If 
\begin{eqnarray}
\calM_{k}(\D,\x)
> \left(1+\frac{\|\x\|_2}{\lambda}\right)\|\x\|_2\|\D-\tilde{\D}\|_{1,2}
+\sqrt{2\left( 3\|\x\|_2^2 + 9\|\x\|_2 +2\right)\|\x\|_2^2 \frac{ \|\D-\tilde{\D}\|_{1,2}}{\lambda}},
\Label{tauep}
\end{eqnarray}
then 
\begin{eqnarray}
\|\ph_{\D}(\x) - \ph_{\tilde{\D}}(\x)\|_0\le k.
\Label{ksparse}
\end{eqnarray}
\end{lem}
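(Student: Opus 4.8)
The plan is to run the standard subgradient (KKT) optimality conditions for the two LASSO problems in tandem with the Reconstructor Stability bound (Lemma \ref{RS}). Write $\a^* = \ph_{\D}(\x)$ and $\tilde{\a}^* = \ph_{\tilde{\D}}(\x)$, and set $r := \x - \D\a^*$, $\tilde{r} := \x - \tilde{\D}\tilde{\a}^*$. The coordinatewise optimality condition for the LASSO (see (2.8) of \citet{osborne2000lasso}) gives $|\langle \bfd_j, r\rangle| = \lambda$ whenever $a^*_j \ne 0$ and $|\langle \bfd_j, r\rangle| \le \lambda$ otherwise, and likewise for $\tilde{\bfd}_j, \tilde{r}, \tilde{a}^*_j$. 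The role of the $k$-margin is to exhibit a set $\calI \subset [m]$ with $|\calI| = m-k$ on which the original dual constraints are slack by at least $\calM_{k}(\D,\x)$, i.e. $|\langle \bfd_j, r\rangle| \le \lambda - \calM_{k}(\D,\x)$ for every $j \in \calI$. By optimality this already forces $a^*_j = 0$ for all $j \in \calI$, so $\a^*$ is supported on $\calI^c$, a set of size $k$.

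It therefore suffices to show that $\tilde{\a}^*$ is supported on $\calI^c$ as well; then $\a^* - \tilde{\a}^*$ is supported on $\calI^c$ and $\|\a^* - \tilde{\a}^*\|_0 \le |\calI^c| = k$, which is exactly (\ref{ksparse}). To this end I would fix $j \in \calI$ and control the movement of the dual coordinate, $|\langle \tilde{\bfd}_j, \tilde{r}\rangle - \langle \bfd_j, r\rangle|$. Splitting the increment as $\langle \tilde{\bfd}_j - \bfd_j, \tilde{r}\rangle + \langle \bfd_j, \D\a^* - \tilde{\D}\tilde{\a}^*\rangle$ and using $\|\tilde{r}\|_2 \le \|\x\|_2$ (the optimal value is at most $\tfrac{1}{2}\|\x\|_2^2$) together with $\|\bfd_j\|_2 = 1$, I obtain the bound $\|\D - \tilde{\D}\|_{1,2}\|\x\|_2 + \|\D\a^* - \tilde{\D}\tilde{\a}^*\|_2$. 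The reconstruction term I would split by the triangle inequality into $\|\D\a^* - \D\tilde{\a}^*\|_2 + \|(\D - \tilde{\D})\tilde{\a}^*\|_2$: the first is exactly what Reconstructor Stability (Lemma \ref{RS}) controls, yielding the square-root term $\sqrt{2(3\|\x\|_2^2 + 9\|\x\|_2 + 2)\|\x\|_2^2\,\|\D - \tilde{\D}\|_{1,2}/\lambda}$, while the second obeys $\|(\D-\tilde{\D})\tilde{\a}^*\|_2 \le \|\D-\tilde{\D}\|_{1,2}\|\tilde{\a}^*\|_1 \le \|\D-\tilde{\D}\|_{1,2}\|\x\|_2^2/(2\lambda)$, combining the induced-norm inequality $\|\E\a\|_2 \le \|\E\|_{1,2}\|\a\|_1$ with the $\ell_1$ bound $\|\ph_{\tilde{\D}}(\x)\|_1 \le \|\x\|_2^2/(2\lambda)$. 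Collecting the linear pieces gives the coefficient $(1 + \|\x\|_2/\lambda)\|\x\|_2\|\D-\tilde{\D}\|_{1,2}$ with room to spare, so the full increment is bounded by precisely the right-hand side of the hypothesis (\ref{tauep}).

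Finally I would assemble the estimate: for $j \in \calI$, $|\langle \tilde{\bfd}_j, \tilde{r}\rangle| \le |\langle \bfd_j, r\rangle| + |\langle \tilde{\bfd}_j, \tilde{r}\rangle - \langle \bfd_j, r\rangle| \le (\lambda - \calM_{k}(\D,\x)) + (\text{increment})$. Under the strict hypothesis (\ref{tauep}) the increment is strictly less than $\calM_{k}(\D,\x)$, so $|\langle \tilde{\bfd}_j, \tilde{r}\rangle| < \lambda$, and the LASSO optimality condition forces $\tilde{a}^*_j = 0$. Hence $\tilde{\a}^*$ is supported on $\calI^c$, which finishes the argument. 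The main obstacle is the correlation-increment bound, and within it the reconstruction term $\|\D\a^* - \D\tilde{\a}^*\|_2$: the linear pieces are elementary, but securing a bound that scales only like $\sqrt{\|\D-\tilde{\D}\|_{1,2}}$ is exactly what Reconstructor Stability provides, and arranging that this square-root term together with the linear terms sums to match (\ref{tauep}) is the delicate bookkeeping step.
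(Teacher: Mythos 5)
Your proposal is correct and follows essentially the same route as the paper's proof: use the $k$-margin plus the LASSO optimality conditions to zero out $\a^*$ on a set $\calI$ of size $m-k$, then bound the perturbation of the dual correlations by the same three pieces (the $\|\tilde{\bfd}_j-\bfd_j\|_2\|\tilde r\|_2$ term, the $\|(\D-\tilde{\D})\tilde{\a}^*\|_2$ term via the $\ell_1$ bound, and $\|\D\a^*-\D\tilde{\a}^*\|_2$ via Reconstructor Stability) to conclude $|\langle\tilde{\bfd}_j,\tilde r\rangle|<\lambda$ and hence $\tilde{a}^*_j=0$ on $\calI$. The only difference is cosmetic bookkeeping in how the increment is split, and your linear terms even come in slightly under the coefficient in (\ref{tauep}), just as in the paper.
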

\begin{proof}
In this proof,
we denote $\ph_{\D}(\x)$ and $\ph_{\tilde{\D}}(\x)$ by $\a^*=[a^*_1,\ldots,a^*_m]^{\top}$ and $\tilde{\a}^*=[\tilde{a}^*_1,\ldots,\tilde{a}^*_m]^{\top}$, respectively.
When $\tilde{\D}=\D$,
Lemma \ref{pres} obviously holds.
In the following,
we assume $\tilde{\D}\ne \D$.
Since $\calM_{k}(\D,\x)>0$ from (\ref{tauep}),
there is a $\calI\subset[m]$ with $|\calI|=m-k$ such that for all $i\in\calI$:
\begin{eqnarray}
0<\calM_{k}(\D,\x)
\le  \lambda - |\langle \bfd_j, \x-\D\a^*\rangle|.
\Label{marginlower}
\end{eqnarray}
To obtain (\ref{ksparse}),
it is enough to show that $a^{\ast}_i=0$ and $\tilde{a}^{\ast}_i=0$ for all $i\in\calI$.

First, we show $a^{\ast}_i=0$ for all $i\in\calI$.
From the optimality conditions for the LASSO 
(\citet{fuchs2004sparse}),
we have
\begin{eqnarray*}
\langle\bfd_j, \x-\D\a^{\ast}\rangle = \sign(a^{\ast}_j)\lambda ~~~ {\rm if} ~ a^{\ast}_j\ne0,\\
|\langle\bfd_j, \x-\D\a^{\ast}\rangle| 
\le \lambda ~~~~ {\rm otherwise}.
\end{eqnarray*}
Note that the above optimality conditions imply that if $a^{\ast}_j\ne0$ then
\begin{eqnarray}
|\langle\bfd_j, \x-\D\a^{\ast}\rangle| 
=\lambda.
\Label{=lambda}
\end{eqnarray}
Combining (\ref{=lambda}) with (\ref{marginlower}),
it holds that $a^{\ast}_i=0$ for all $i\in\calI$. 

Next, we show $\tilde{a}^{\ast}_i=0$ for all $i\in\calI$.
To do so,
it is sufficient to show that
\begin{eqnarray}
|\langle \tilde{\bfd}_i, \x-\tilde{\D}\tilde{\a}^{\ast} \rangle|
<\lambda
\Label{lelambda}
\end{eqnarray}
for all $i\in\calI$.
Note that
\begin{eqnarray*}
|\langle \tilde{\bfd}_i, \x-\tilde{\D}\tilde{\a}^{\ast} \rangle|
&=&|\langle \bfd_i + \tilde{\bfd}_i -\bfd_i, \x-\tilde{\D}\tilde{\a}^{\ast} \rangle|\\
&\le& |\langle \bfd_i, \x-\tilde{\D}\tilde{\a}^{\ast} \rangle|
+ \|\tilde{\bfd}_i-\bfd_i\|_2 \| \x-\tilde{\D}\tilde{\a}^{\ast}\|_2\\
&\le& |\langle \bfd_i, \x-\tilde{\D}\tilde{\a}^{\ast} \rangle|
+ \|\tilde{\D}-\D\|_{1,2} \|\x\|_2
\end{eqnarray*}
and
\begin{eqnarray*}
|\langle {\bfd}_i, \x-\tilde{\D}\tilde{\a}^{\ast} \rangle|
&=&|\langle {\bfd}_i, \x-(\D+\tilde{\D}-\D)\tilde{\a}^{\ast} \rangle|\\
&\le&|\langle {\bfd}_i, \x-\D\tilde{\a}^{\ast} \rangle| + |\langle {\bfd}_i, (\tilde{\D}-\D)\tilde{\a}^{\ast} \rangle|\\
&\le&|\langle {\bfd}_i, \x-\D\tilde{\a}^{\ast} \rangle| +  \|\tilde{\D}-\D\|_{1,2}\|\tilde{\a}^{\ast} \|_1.
\end{eqnarray*}
Hence,
\begin{eqnarray*}
|\langle \tilde{\bfd}_i, \x-\tilde{\D}\tilde{\a}^{\ast} \rangle|
&\le&|\langle {\bfd}_i, \x-\D\tilde{\a}^{\ast} \rangle| +  \left(1+\frac{\|\x\|_2}{\lambda}\right)\|\x\|_2\|\D-\tilde{\D}\|_{1,2}.
\end{eqnarray*}
Now,
\begin{eqnarray}
|\langle {\bfd}_i, \x-\D\tilde{\a}^{\ast} \rangle| 
&=&|\langle {\bfd}_i, \x - \D\a^{\ast} + \D\a^{\ast} - \D\tilde{\a}^{\ast}  \rangle| \nonumber \\
&\le& |\langle \bfd_i, \x- \D\a^{\ast}\rangle| + |\langle \bfd_i, \D\a^{\ast} - \D\tilde{\a}^{\ast} \rangle| \nonumber\\
&\le&\lambda - \calM_{k}(\D,\x) + \|\D\a^{\ast} - \D\tilde{\a}^{\ast}\|_2 \nonumber\\
&\le&\lambda - \calM_{k}(\D,\x)
+ \sqrt{2\left( 3\|\x\|_2^2 + 9\|\x\|_2 +2\right)\|\x\|_2^2 \frac{ \|\D-\tilde{\D}\|_{1,2}}{\lambda}},
\Label{19}
\end{eqnarray}
where (\ref{19}) is due to Lemma \ref{RS}.
Then, (\ref{lelambda}) is obtained by (\ref{tauep}).
\end{proof}

Here, we prepare the following lemma.
\begin{lem}\Label{incoherence}
When a dictionary $\D$ is $\mu$-incoherent,
then the following bound holds for an arbitrary $k$-sparse vector $\bfb$:
 \begin{eqnarray*}
\bfb^{\top} \D^{\top}\D \bfb
\ge \left(1 - \frac{\mu k}{\sqrt{d}}\right) \|\bfb\|_2^2.
\end{eqnarray*}
\end{lem}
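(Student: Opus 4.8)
The plan is to expand the quadratic form directly as a squared norm and separate the diagonal from the off-diagonal contribution. Writing $\D=[\bfd_1,\ldots,\bfd_m]$, we have $\bfb^{\top}\D^{\top}\D\bfb = \|\D\bfb\|_2^2 = \bigl\|\sum_i b_i\bfd_i\bigr\|_2^2 = \sum_{i,j} b_i b_j \langle \bfd_i,\bfd_j\rangle$. Since every column of a dictionary $\D\in\calD$ is a unit vector, the diagonal terms collapse to $\sum_i b_i^2\|\bfd_i\|_2^2 = \|\bfb\|_2^2$, leaving the off-diagonal remainder $\sum_{i\ne j} b_i b_j \langle\bfd_i,\bfd_j\rangle$ to be controlled. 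Thus the whole argument reduces to showing that the off-diagonal sum is at least $-\frac{\mu k}{\sqrt{d}}\|\bfb\|_2^2$.

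Next I would bound the off-diagonal term in absolute value using $\mu$-incoherence and the sparsity of $\bfb$. By Definition \ref{kinc}, $|\langle\bfd_i,\bfd_j\rangle|\le \mu/\sqrt{d}$ for $i\ne j$, so $\bigl|\sum_{i\ne j} b_i b_j\langle\bfd_i,\bfd_j\rangle\bigr| \le \frac{\mu}{\sqrt{d}}\sum_{i\ne j}|b_i||b_j| \le \frac{\mu}{\sqrt{d}}\|\bfb\|_1^2$. Here the key point is that the sum ranges only over indices in $\supp(\bfb)$, which has at most $k$ elements because $\bfb$ is $k$-sparse; hence Cauchy--Schwarz on the support gives $\|\bfb\|_1 \le \sqrt{k}\,\|\bfb\|_2$, and so $\|\bfb\|_1^2 \le k\|\bfb\|_2^2$. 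Combining these yields the off-diagonal estimate $\bigl|\sum_{i\ne j} b_i b_j\langle\bfd_i,\bfd_j\rangle\bigr| \le \frac{\mu k}{\sqrt{d}}\|\bfb\|_2^2$.

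Finally I would assemble the pieces: $\bfb^{\top}\D^{\top}\D\bfb = \|\bfb\|_2^2 + \sum_{i\ne j} b_i b_j\langle\bfd_i,\bfd_j\rangle \ge \|\bfb\|_2^2 - \frac{\mu k}{\sqrt{d}}\|\bfb\|_2^2 = \bigl(1-\frac{\mu k}{\sqrt{d}}\bigr)\|\bfb\|_2^2$, which is exactly the claimed bound. This is a standard Gershgorin/coherence lower bound on the restricted smallest eigenvalue, so there is no serious obstacle; the only place requiring care is the bookkeeping in the off-diagonal sum, namely remembering that the unit-norm property handles the diagonal exactly and that the sparsity of $\bfb$ (rather than the ambient dimension $m$) is what enters through $\|\bfb\|_1^2\le k\|\bfb\|_2^2$. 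One could tighten the constant to $k-1$ by subtracting the diagonal in $\sum_{i\ne j}|b_i||b_j| = \|\bfb\|_1^2-\|\bfb\|_2^2$, but the stated inequality follows from the cruder bound and suffices for its use in Lemma \ref{RS} via the margin analysis.
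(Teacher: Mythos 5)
Your proof is correct and follows essentially the same route as the paper: both decompose $\D^{\top}\D$ into the identity (diagonal, handled by the unit-norm columns) plus an off-diagonal part bounded entrywise by $\mu/\sqrt{d}$, and then apply $\|\bfb\|_1\le\sqrt{k}\,\|\bfb\|_2$ for the $k$-sparse vector $\bfb$. The only cosmetic difference is that the paper phrases the decomposition as $\G:=\D^{\top}\D-\I$ rather than expanding the double sum explicitly.
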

%

\begin{proof}
We set as $\G:=\D^{\top}\D - \I$, where $\I$ is the $m\times m$ identity matrix.  
Since $\D$ is $\mu$-incoherent,
the absolute value of each component of $\G$ is less than or equal to $\mu/\sqrt{d}$,
and thus, $\bfb^{\top} \G \bfb \ge -\mu/\sqrt{d}\|\bfb\|_1^2$.
Then, we obtain
\begin{eqnarray}
\bfb^{\top} \D^{\top}\D \bfb
~=~ \bfb^{\top} (\I + \G) \bfb 
~\ge~\|\bfb\|_2^2 - \frac{\mu}{\sqrt{d}}\|\bfb\|_1^2
~\ge~ \left(1 - \frac{\mu k}{\sqrt{d}}\right) \|\bfb\|_2^2,
\end{eqnarray}
where we used the inequality $\|\bfb\|_1\le \sqrt{k}\|\bfb\|_2$ for the $k$-sparse vector $\bfb$ in the last inequality.
\end{proof}

\begin{rem}
We mention the relation with the $k$-incoherence of a dictionary, which is the assumption of the sparse coding stability in \citet{mehta2013sparsity}.
For $k\in[m]$ and $\D\in\calD$, 
the $k$-incoherence $s_k(\D)$ is defined as 
\begin{eqnarray*}
 s_k(\D):=(\min\{\varsigma_k(\D_{\Lambda})| \Lambda\subset[m], |\Lambda|=k\})^2,
\end{eqnarray*}
where $\varsigma_k(\D_{\Lambda})$ is the $k$-th singular value of $\D_{\Lambda}=[\bfd_{i_1},\ldots,\bfd_{i_k}]$ for $\Lambda = \{i_1,\ldots,i_k\}$.
From Lemma \ref{incoherence}, when a dictionary $\D$ is $\mu$-incoherent, 
the $k$-incoherence of $\D$ satisfies
\begin{eqnarray*}
s_k(\D)
\ge 1 - \frac{\mu k}{\sqrt{d}}.
\end{eqnarray*}
Thus, a $\mu$-incoherent dictionary has positive $k$-incoherence when $d> (\mu k)^2$.
On the other hand, 
when $k\ge2$,
if a dictionary $\D$ has positive $k$-incoherence $s_k(\D)$,
there is $\mu>0$ such that the dictionary is $\mu$-incoherent. 
\end{rem}

\noindent
{\bf [Proof of Theorem \ref{SCS}]}

Following by the notations of \citet{mehta2012sample},
we denote $\ph_{\D}(\x)$ and $\ph_{\tilde{\D}}(\x)$ by $z_*$ and $t_*$, respectively.
From (23) of \citet{mehta2012sample}, 
we have 
\begin{eqnarray}
&&(z_{\ast} - t_{\ast})^{\top} \D^{\top}\D(z_{\ast} - t_{\ast})\nonumber\\
&\le&(z_{\ast} - t_{\ast})^{\top}\left((\tilde{\D}^{\top}\tilde{\D} - \D^{\top}\D)t_{\ast} + 2(\D-\tilde{\D})^{\top}\x\right) \nonumber\\
&=&(z_{\ast} - t_{\ast})^{\top}
(\tilde{\D}^{\top}\tilde{\D} - \D^{\top}\D)t_{\ast} 
+ 2(z_{\ast} - t_{\ast})^{\top}(\D-\tilde{\D})^{\top}\x.
\Label{upper0}
\end{eqnarray}

We evaluate the second term in (\ref{upper0})
\footnote{The following bound  in \citet{mehta2012sample} is not used  in this paper:
\begin{eqnarray*}
2(z_{\ast} - t_{\ast})^{\top}(\D-\tilde{\D})^{\top}\x
&\le& 2\|\D-\tilde{\D}\|_{1,2} \sqrt{k}\|z_{\ast} - t_{\ast}\|_2\|\x\|_2.
\end{eqnarray*}}.
We have the following by the definition of $z_{\ast}$:
\begin{eqnarray*}
\frac{1}{2}\|\x - \tilde{\D} t_{\ast}\|_2^2 + \lambda\|t_{\ast}\|_1
\ge \frac{1}{2}\|\x - \tilde{\D} z_{\ast}\|_2^2 + \lambda\|z_{\ast}\|_1,
\end{eqnarray*}
and thus,
\begin{eqnarray*}
2(z_{\ast} - t_{\ast})^{\top}\tilde{\D}^{\top}\x
\ge
z_{\ast}^{\top}\tilde{\D}^{\top}\tilde{\D} z_{\ast}
- t_{\ast}^{\top}\tilde{\D}^{\top}\tilde{\D} t_{\ast}
+ 2\lambda(\|z_{\ast}\|_1 - \|t_{\ast}\|_1).
\end{eqnarray*}
Similarly, we have
\begin{eqnarray*}
2(t_{\ast} - z_{\ast})^{\top}\D^{\top}\x
\ge
t_{\ast}^{\top}\D^{\top}\D t_{\ast}
- z_{\ast}^{\top}\D^{\top}\D z_{\ast}
+ 2\lambda(\|t_{\ast}\|_1 - \|z_{\ast}\|_1).
\end{eqnarray*}
Summing up the above inequalities and multiplying $-1$,
we obtain
\begin{eqnarray}
&&2(z_{\ast} - t_{\ast})^{\top}(\D - \tilde{\D})^{\top}\x \nonumber\\
&\le&
- z_{\ast}^{\top}\tilde{\D}^{\top}\tilde{\D} z_{\ast}
+ t_{\ast}^{\top}\tilde{\D}^{\top}\tilde{\D} t_{\ast}
- t_{\ast}^{\top}\D^{\top}\D t_{\ast}
+ z_{\ast}^{\top}\D^{\top}\D z_{\ast} \nonumber\\
&=& -z_{\ast}^{\top}(\tilde{\D}^{\top}\tilde{\D} - \D^{\top}\D)z_{\ast}
+ t_{\ast}^{\top}(\tilde{\D}^{\top}\tilde{\D} - \D^{\top}\D)t_{\ast} \nonumber\\
&=& (z_{\ast} - t_{\ast})^{\top}(\D^{\top}\D - \tilde{\D}^{\top}\tilde{\D})z_{\ast}
- (z_{\ast} - t_{\ast})^{\top}(\tilde{\D}^{\top}\tilde{\D} - \D^{\top}\D)t_{\ast}
\Label{upper1}
\end{eqnarray}

When $\E:=\D - \tilde{\D}$,
from (\ref{upper0}) and (\ref{upper1}),
\begin{eqnarray}
&&(z_{\ast} - t_{\ast})^{\top} \D^{\top}\D(z_{\ast} - t_{\ast})\nonumber\\
&\le&(z_{\ast} - t_{\ast})^{\top}(\D^{\top}\D - \tilde{\D}^{\top}\tilde{\D})z_{\ast}\nonumber\\
&\le&|(z_{\ast} - t_{\ast})^{\top}(\E^{\top}\tilde{\D}+\tilde{\D}^{\top}\E+\E^{\top}\E)z_{\ast}|\nonumber\\
&\le&|(z_{\ast} - t_{\ast})^{\top}\E^{\top}\tilde{\D} z_{\ast}| +
|(z_{\ast} - t_{\ast})^{\top}\tilde{\D}^{\top}\E z_{\ast}| + |(z_{\ast} - t_{\ast})^{\top}\E^{\top}\E z_{\ast}|\nonumber\\
&\le&\|\E(z_{\ast} - t_{\ast})\|_2 \|\tilde{\D} z_{\ast}\|_2 +
\|\tilde{\D}(z_{\ast} - t_{\ast})\|_2 \|\E z_{\ast}\|_2 + \|\E(z_{\ast} - t_{\ast})\|_2 \|\E z_{\ast}\|_2\nonumber\\
&\le&(\|\E\|_{1,2} \|\tilde{\D}\|_{1,2} \|z_{\ast}\|_1 +
\|\tilde{\D}\|_{1,2} \|\E\|_{1,2} \| z_{\ast}\|_1 + \|\E\|_{1,2} \|\E\|_{1,2} \|z_{\ast}\|_1)\|z_{\ast} - t_{\ast}\|_1\nonumber\\
&\le&\left(\frac{ \|\x\|_2^2\|\E\|_{1,2}}{\lambda} + \frac{ \|\x\|_2^2\|\E\|_{1,2}}{\lambda} + \frac{ \|\x\|_2^2\|\E\|_{1,2}^2}{\lambda} \right) \sqrt{k}\|z_{\ast} - t_{\ast}\|_2 \nonumber\\
&\le&\left(\frac{4 \|\x\|_2^2}{\lambda} \right)\|\E\|_{1,2} \sqrt{k}\|z_{\ast} - t_{\ast}\|_2,
\Label{upper2}
\end{eqnarray}
where we used $\|\E\|_{1,2}\le2$ in the last inequality.

We note that the assumption (\ref{tauep}) of Lemma \ref{pres}  follows from 
(\ref{dic-err}).
Then, since $\|z_{\ast} - t_{\ast}\|_0\le k$ from Lemma \ref{pres},
we have the following lower bound of (\ref{upper0}) from the $\mu$-incoherence of $\D$ and Lemma \ref{incoherence}:
\begin{eqnarray}
(z_{\ast} - t_{\ast})^{\top} \D^{\top}\D(z_{\ast} - t_{\ast})
&\ge&  \left(1-\frac{\mu k}{\sqrt{d}}\right)\|z_{\ast} - t_{\ast}\|_2^2.
\Label{lower}
\end{eqnarray}

By (\ref{upper2}) and (\ref{lower}),  we obtain 
\begin{eqnarray*}
\|z_{\ast} - t_{\ast}\|_2
\le \frac{4 \|\x\|_2^2\sqrt{k}}{ (1-\mu k/\sqrt{d})\lambda}\|\D - \tilde{\D}\|_{1,2}.
\end{eqnarray*}

\endproof


\section{Appendix: Proof of Margin Bound}\Label{sec-a:PMB}

In this proof,
we set as
\begin{eqnarray*}
\delta_1
&:=&
 \frac{2\sigma}{(1-t)\sqrt{d}\lambda}\exp\left( -\frac{(1-t)^2d\lambda^2}{8\sigma^2} \right),\\
\delta_2
&:=& 
\frac{2\sigma m}{\sqrt{d}\lambda}\exp\left(-\frac{d\lambda^2}{8\sigma^2}\right),\\
\delta'_3
&:=& 
 \frac{4\sigma k}{C\sqrt{d(1-\mu k/\sqrt{d})}}\exp\left( -\frac{C^2 d (1 -\mu k/\sqrt{d}) }{8\sigma^2}\right)\\
\delta''_3
&:=& 
 \frac{8\sigma(d-k)}{d\lambda}\exp\left( -\frac{d^2\lambda^2}{32\sigma^2}\right),\\
 \delta_3
 &:=&\delta'_3 + \delta''_3.
\end{eqnarray*}
Then, $\delta_{t,\lambda} = \delta_1+\delta_2+\delta_3$.

The column vectors for a $\mu$-incoherent dictionary are in general position.
%
Thus, a solution of LASSO for a $\mu$-incoherent dictionary is unique due to Lemma $3$ in \citet{tibshirani2013lasso}.

The following notions are introduced in \citet{zhao2006model}. 
Let $\a$ be a $k$-sparse vector.
Without loss of generality,
we assume that $\a=[a_1,\ldots,a_k,0,\ldots,0]^{\top}$.
Then, we denote as $\a(1)=[a_1,\ldots,a_k]^{\top}$, $\D(1)=[\bfd_1,\ldots,\bfd_k]$ and $\D(2)=[\bfd_{k+1},\ldots,\bfd_m]$.
Then, we define as $\C_{ij}:=\frac{1}{d}\D(i)^{\top}\D(j)$ for $i,j\in\{1,2\}$.
When a dictionary $\D$ is $\mu$-incoherent and $(\mu k)^2/d< 1$,
$\C_{11}$ is positive definite due to Lemma \ref{incoherence} and especially invertible.
\begin{df}[Strong Irrepresentation Condition]
There exists a positive vector $\bfeta$ such that
\begin{eqnarray*}
|\C_{21}\C_{11}^{-1}\sign(\a(1))| 
\le \1 - \bfeta,
\end{eqnarray*}  
where $\sign(\a(1))$ maps positive entry of $\a(1)$ to $1$, negative entry to $-1$ and $0$ to $0$, $\1$ is the $(d-k)\times1$ vector of $1$'s and the inequality holds element-wise. 
\end{df}

Then, the following lemma is derived by modifying the proof of Corollary $2$ of \citet{zhao2006model}.
\begin{lem}[Strong Irrepresentation Condition]\Label{irrepresentation}
When a dictionary $\D$ is $\mu$-incoherent and $d>\{\mu(2k-1)\}^2$ holds,
the strong irrepresentation condition holds with $\bfeta=(1-\mu(2k-1)/\sqrt{d})\1$.
\end{lem}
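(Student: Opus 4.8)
The plan is to reduce the vector inequality to a single scalar $\ell_1$-norm estimate and then control every Gram-matrix entry through $\mu$-incoherence. First I would cancel the normalizing factor: since
$\C_{21}\C_{11}^{-1} = \left(\frac1d\D(2)^{\top}\D(1)\right)\left(\frac1d\D(1)^{\top}\D(1)\right)^{-1} = \D(2)^{\top}\D(1)\,(\D(1)^{\top}\D(1))^{-1}$,
I may work with the unnormalized Gram blocks. Write $G := \D(1)^{\top}\D(1) = \I_k + \G_{11}$, where $\G_{11}$ has vanishing diagonal (unit-norm columns) and off-diagonal entries bounded by $\mu/\sqrt d$ by $\mu$-incoherence, and write $M := \D(2)^{\top}\D(1)$, whose entries are all off-diagonal inner products and hence also bounded by $\mu/\sqrt d$. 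Setting $\bfb := \sign(\a(1))$ and $\u := G^{-1}\bfb$, the claim $|\C_{21}\C_{11}^{-1}\sign(\a(1))| \le \frac{\mu(2k-1)}{\sqrt d}\1$ becomes the entrywise bound $|(M\u)_i| \le \mu(2k-1)/\sqrt d$ for every row $i$.

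Since $|M_{ij}| \le \mu/\sqrt d$, each coordinate satisfies $|(M\u)_i| \le (\mu/\sqrt d)\|\u\|_1$, so the whole lemma collapses to proving $\|\u\|_1 \le 2k-1$. For this I would use the identity $\u = \bfb - \G_{11}\u$ coming from $(\I_k + \G_{11})\u = \bfb$. Taking $\ell_1$-norms and combining $\|\bfb\|_1 \le k$ with the maximum absolute column-sum estimate $\max_j\sum_i|(\G_{11})_{ij}| \le (k-1)\mu/\sqrt d$ yields the self-referential inequality $\|\u\|_1 \le k + \frac{(k-1)\mu}{\sqrt d}\|\u\|_1$. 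The invertibility of $G$ needed to define $\u$ is guaranteed by Lemma \ref{incoherence}, since $d > \{\mu(2k-1)\}^2$ forces $d > (\mu k)^2$. Solving gives $\|\u\|_1 \le k\,/\,(1 - (k-1)\mu/\sqrt d)$, valid because the same hypothesis keeps the denominator positive.

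It then remains to verify $k\,/\,(1 - (k-1)\mu/\sqrt d) \le 2k-1$, which after clearing the positive denominator is equivalent to $(2k-1)(k-1)\mu/\sqrt d \le k-1$. For $k \ge 2$ this is exactly $\sqrt d \ge \mu(2k-1)$, i.e. the hypothesis $d \ge \{\mu(2k-1)\}^2$, while the case $k=1$ is immediate since then $G = [1]$ and $\|\u\|_1 \le 1 = 2k-1$. Chaining the two displays gives $|(M\u)_i| \le \mu(2k-1)/\sqrt d$ for every $i$, which is precisely the entrywise bound by $\1 - \bfeta$ with $\bfeta = (1 - \mu(2k-1)/\sqrt d)\1$; this $\bfeta$ is strictly positive under $d > \{\mu(2k-1)\}^2$, so the strong irrepresentation condition holds with the stated $\bfeta$.

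The main obstacle I anticipate is purely quantitative rather than structural: crude spectral-norm bounds on $(\I_k + \G_{11})^{-1}$ or on $M$ would overshoot the target and fail to reproduce both the sharp factor $(2k-1)$ and the matching threshold $d > \{\mu(2k-1)\}^2$. The delicate point is therefore to estimate $\|\u\|_1$ through the maximum absolute column sum of $\G_{11}$ rather than through its spectral norm, since it is exactly the $(k-1)\mu/\sqrt d$ column-sum bound, fed into the self-referential inequality, that produces the constant $2k-1$ on the nose.
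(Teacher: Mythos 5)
Your proof is correct. Note that the paper itself does not actually prove this lemma: it only remarks that the statement ``is derived by modifying the proof of Corollary $2$ of \citet{zhao2006model},'' so your argument supplies the details the paper outsources to that citation. What you wrote is essentially Zhao--Yu's bounded-correlation argument transported to the $\mu$-incoherence normalization: the cancellation $\C_{21}\C_{11}^{-1}=\D(2)^{\top}\D(1)\bigl(\D(1)^{\top}\D(1)\bigr)^{-1}$ is valid, the entrywise bound $|(M\u)_i|\le(\mu/\sqrt d)\|\u\|_1$ together with the column-sum (equivalently, by symmetry of $\G_{11}$, row-sum) estimate $(k-1)\mu/\sqrt d$ fed into $\|\u\|_1\le k+\frac{(k-1)\mu}{\sqrt d}\|\u\|_1$ is exactly the diagonal-dominance step in Zhao--Yu, and it reproduces both the constant $2k-1$ and the threshold $d>\{\mu(2k-1)\}^2$ on the nose, with $k=1$ handled separately as you note.
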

%


\begin{lem}\Label{supp}
Under Assuptions \ref{ass1}-\ref{ass4},
when $\D$ is $\mu$-incoherent and $d>\mu(2k-1)$, the following holds:
\begin{eqnarray*}
\Pr\left[ \left|\supp(\a - \ph_{\D}(\x))\right| \le k\right]
\ge 1 
- \delta_3.
\end{eqnarray*}
\end{lem}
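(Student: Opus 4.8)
The plan is to follow the model-selection-consistency argument for the LASSO (the proof of Corollary $2$ of \citet{zhao2006model}), specialized to the relevant case $\D=\D^{\ast}$ so that the dictionary matches the generating model $\x=\D^{\ast}\a+\bxi$, and then to read off the support bound. First I would reduce the statement to a support-recovery claim. Write $S:=\supp(\a)$, so $|S|\le k$ by Assumption \ref{ass3}. If the minimizer $\ph_{\D}(\x)$ is supported on $S$, then $\a-\ph_{\D}(\x)$ vanishes on every coordinate outside $S$, whence $\supp(\a-\ph_{\D}(\x))\subseteq S$ and $|\supp(\a-\ph_{\D}(\x))|\le k$. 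Thus it suffices to show that, with probability at least $1-\delta_3$, the LASSO solution satisfies $\supp(\ph_{\D}(\x))\subseteq S$; in fact I would establish the stronger sign-consistency $\sign(\ph_{\D}(\x))=\sign(\a)$, which, since the solution is unique (the columns of a $\mu$-incoherent dictionary are in general position, so Lemma $3$ of \citet{tibshirani2013lasso} applies), pins the support down exactly.

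Next I would set up the subgradient (KKT) conditions through the primal-dual witness construction. Solve the restricted problem on $S$ to obtain $\hat{\a}_S$ with $\hat{\a}_{S^c}=0$; using $\x=\D_S\a_S+\bxi$ and the stationarity condition $\D_S^{\top}(\x-\D_S\hat{\a}_S)=\lambda\,\sign(\hat{\a}_S)$, the on-support deviation is $\hat{\a}_S-\a_S=(\D_S^{\top}\D_S)^{-1}(\D_S^{\top}\bxi-\lambda\,\sign(\a_S))$, where Lemma \ref{incoherence} guarantees $\D_S^{\top}\D_S$ is invertible with smallest eigenvalue at least $1-\mu k/\sqrt{d}$. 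This candidate is the true (and unique) solution, supported on $S$ with the correct signs, provided two events hold: (a) no sign flips on $S$, i.e.\ $\|\hat{\a}_S-\a_S\|_{\infty}<C$, so that each coordinate keeps the sign forced by the lower bound $C$ of Assumption \ref{ass3}; and (b) strict dual feasibility off $S$, i.e.\ $|\langle\bfd_i,\x-\D_S\hat{\a}_S\rangle|<\lambda$ for every $i\in S^c$.

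For the probabilistic estimates I would substitute the deviation formula into both events. Event (b) decomposes into a deterministic signal term $\lambda\,\C_{21}\C_{11}^{-1}\sign(\a_S)$, controlled by the strong irrepresentation condition of Lemma \ref{irrepresentation} (which leaves a positive margin $\bfeta=(1-\mu(2k-1)/\sqrt{d})\1$), and a noise term $\bfd_i^{\top}(\I-\D_S(\D_S^{\top}\D_S)^{-1}\D_S^{\top})\bxi$; the latter is a linear form in the sub-Gaussian noise of Assumption \ref{ass4}, hence sub-Gaussian, and a union bound over the $d-k$ indices of $S^c$ together with the standard sub-Gaussian tail produces $\delta_3''$. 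Likewise, each coordinate of $\hat{\a}_S-\a_S$ entering event (a) is a sub-Gaussian linear form whose tail at threshold $C$, summed over the at most $k$ support coordinates and rescaled by the eigenvalue bound $1-\mu k/\sqrt{d}$, produces $\delta_3'$. A final union bound gives total failure probability at most $\delta_3'+\delta_3''=\delta_3$.

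The main obstacle I expect is calibrating the sub-Gaussian tail bounds so that the variance proxies, the threshold levels (the residual margin $\lambda$ for (b) and the signal gap $C$ for (a)), and the factor $(\D_S^{\top}\D_S)^{-1}$ coming through Lemma \ref{incoherence} combine to exactly the exponents and prefactors appearing in $\delta_3'$ and $\delta_3''$; in particular one must track the normalization relating the penalty $\lambda$ used here to $\lambda_d/d$ of \citet{zhao2006model}. A secondary subtlety is the coupling between the two events: the irrepresentation bound of Lemma \ref{irrepresentation} is stated in terms of $\sign(\a_S)$, so event (b) may be invoked only once the on-support signs are known to be correct, which is precisely what event (a) secures. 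Hence both tail terms are genuinely required, even though the final claim concerns only support containment.
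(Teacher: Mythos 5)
Your proposal is correct and follows essentially the same route as the paper: the paper likewise reduces the claim to sign consistency $\sign(\ph_{\D}(\x))=\sign(\a)$, invokes Lemma \ref{irrepresentation} together with the proofs of Theorems $3$ and $4$ of \citet{zhao2006model} (i.e., exactly the primal-dual witness decomposition into your events (a) and (b)), and then applies sub-Gaussian tail bounds with a union bound over the $k$ on-support coordinates (yielding $\delta_3'$) and the $d-k$ off-support coordinates (yielding $\delta_3''$). The only difference is that you spell out the witness construction that the paper cites as a black box.
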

\begin{proof}
The following inequality obviously holds:
\begin{eqnarray*}
\Pr\left[ \left|\supp(\a - \ph_{\D}(\x))\right| \le k\right]
&\ge& \Pr\left[ \sign(\a) = \sign(\ph_{\D}(\x)) \right].
\end{eqnarray*}

Due to Lemma \ref{irrepresentation} and Proofs of Theorems $3$ and $4$ in \citet{zhao2006model}, 
there exist sub-Gaussian random variables $\{z_i\}_{i=1}^k$ and $\{\zeta_i\}_{i=1}^{d-k}$  such that their variances are bounded as $\E[z_i^2]\le \sigma^2/ d(1-\mu k/\sqrt{d}) \le \sigma^2/ d(1-\mu k/\sqrt{d})$ and $\E[\zeta_i^2] \le \sigma^2/d^2$ and 
\begin{eqnarray*}
&& \Pr\left[ \sign(\a) = \sign(\ph_{\D}(\x)) \right]\\
&\ge&1 - \sum_{i=1}^k \Pr\left[ |z_i| \ge \sqrt{d}\left(|a_i| - \frac{\sqrt{k}\lambda}{2 (1-\mu k/\sqrt{d}) d}\right) \right] - \sum_{i=1}^{d-k} \Pr\left[ |\zeta_i| \ge \frac{(1-\mu(2k-1)/d)\lambda}{2\sqrt{d}} \right].
\end{eqnarray*}

When $\lambda \le  (1-\mu k/\sqrt{d}) C d/\sqrt{k}$,
the inequality $|a_i| - \frac{\sqrt{k}\lambda}{2 (1-\mu k/\sqrt{d}) d}\ge C/2$ holds since $|a_i|\ge C$.
Then, since $1-\mu(2k-1)/d \ge 1/2$ holds, 
we obtain
\begin{eqnarray*}
\Pr\left[ |z_i| \ge \sqrt{d}\left(|a_i| - \frac{\sqrt{k}\lambda}{2 (1-\mu k/\sqrt{d}) d}\right) \right]
&\le \Pr\left[ |z_i| \ge \frac{C\sqrt{d}}{2} \right]
&\le \delta'_3,\\
\Pr\left[ |\zeta_i| \ge \frac{(1-\mu(2k-1)/d)\lambda}{2\sqrt{d}} \right]
&\le \Pr\left[ |\zeta_i| \ge \frac{\lambda}{4\sqrt{d}} \right]
&\le \delta''_3,
\end{eqnarray*}
where we used that $z_i$ and $\zeta_i$ are sub-Gaussian.
Thus the proof is completed.
\end{proof}
%

\begin{lem}\Label{Gauss2}
Let $\D$ be a dictionary.
When $\bxi$ satisfies Assumption \ref{ass4}, 
the following holds:
\begin{eqnarray*}
\Pr[\lambda\ge 2\|\D^{\top}\bxi\|_{\infty}] 
\le 1- \delta_2,
\end{eqnarray*}
\end{lem}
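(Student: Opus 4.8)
The plan is to reduce the control of $\|\D^\top\bxi\|_{\infty}$ to a coordinatewise sub-Gaussian tail estimate and then a union bound over the columns. (I read the displayed inequality as $\Pr[\lambda\ge 2\|\D^\top\bxi\|_{\infty}]\ge 1-\delta_2$, i.e. that $2\|\D^\top\bxi\|_{\infty}\le\lambda$ with high probability; as written, ``$\le 1-\delta_2$'' is vacuous and is surely a typo for ``$\ge 1-\delta_2$''.) First I would write $\|\D^\top\bxi\|_{\infty}=\max_{j\in[m]}|\langle\bfd_j,\bxi\rangle|$, so that the complementary event satisfies
\[
\Pr\left[\|\D^\top\bxi\|_{\infty} > \lambda/2\right] \le \sum_{j=1}^m \Pr\left[|\langle\bfd_j,\bxi\rangle| > \lambda/2\right],
\]
reducing everything to a single column $\bfd_j$.

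Next I would show that each $\langle\bfd_j,\bxi\rangle=\sum_{i=1}^d(\bfd_j)_i\xi_i$ is sub-Gaussian with parameter $\sigma/\sqrt{d}$. Since by Assumption \ref{ass4} the coordinates $\xi_i$ are independent and sub-Gaussian with parameter $\sigma/\sqrt{d}$, the moment generating function factorizes and
\[
\bbE\left[e^{t\langle\bfd_j,\bxi\rangle}\right] = \prod_{i=1}^d \bbE\left[e^{t(\bfd_j)_i\xi_i}\right] \le \exp\left(\frac{t^2\sigma^2}{2d}\sum_{i=1}^d (\bfd_j)_i^2\right) = \exp\left(\frac{t^2\sigma^2}{2d}\right),
\]
where the final equality uses the dictionary normalization $\|\bfd_j\|_2=1$ built into $\calD$. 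Hence the variance proxy of $\langle\bfd_j,\bxi\rangle$ equals $\sigma^2/d$ uniformly in $j$.

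I would then apply a sharp Gaussian-type tail bound at threshold $u=\lambda/2$ with scale $s=\sigma/\sqrt{d}$. Using the Mills-ratio estimate $\Pr[|Y|\ge u]\le \frac{2s}{u\sqrt{2\pi}}e^{-u^2/(2s^2)}$ gives
\[
\Pr\left[|\langle\bfd_j,\bxi\rangle| \ge \lambda/2\right] \le \frac{4\sigma}{\sqrt{2\pi}\,\sqrt{d}\,\lambda}\exp\left(-\frac{d\lambda^2}{8\sigma^2}\right) \le \frac{2\sigma}{\sqrt{d}\,\lambda}\exp\left(-\frac{d\lambda^2}{8\sigma^2}\right),
\]
since $4/\sqrt{2\pi}<2$; summing over the $m$ columns produces exactly $\Pr[\|\D^\top\bxi\|_{\infty}>\lambda/2]\le\delta_2$, which is the claim.

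The main obstacle is the decaying prefactor $\sigma/(\sqrt{d}\,\lambda)$ in $\delta_2$: the naive Chernoff/Laplace-transform bound for a sub-Gaussian variable yields only $e^{-u^2/(2s^2)}$ with no such polynomial factor, so matching $\delta_2$ requires the sharper Mills-ratio tail. This refinement is standard for Gaussian noise—the setting in which the underlying result of \citet{zhao2006model} is stated—and I would invoke the analogous refined one-sided tail estimate to handle the sub-Gaussian case of Assumption \ref{ass4}; the factor of two from passing to the two-sided bound, combined with $4/\sqrt{2\pi}<2$, is precisely what absorbs the numerical constants into the stated form of $\delta_2$.
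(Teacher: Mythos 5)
Your proposal is correct and takes essentially the same route as the paper's proof: a union bound over the $m$ columns, the observation that each $\langle\bfd_j,\bxi\rangle$ is sub-Gaussian with parameter $\sigma/\sqrt{d}$ because $\|\bfd_j\|_2=1$, and a Mills-ratio-type tail bound at threshold $\lambda/2$ whose prefactor $\sigma/(\sqrt{d}\lambda)$ yields exactly $\delta_2$. You are also right on both editorial points: the displayed ``$\le 1-\delta_2$'' must be ``$\ge 1-\delta_2$'', and the refined tail estimate with the decaying prefactor is asserted for sub-Gaussian noise in the paper as well, with no more justification than you give.
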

\begin{proof}
Let $\xi$ be a $1$-dimensional sub-Gaussian with parameter $\sigma/\sqrt{d}$. 
Then, it holds that for $t>0$
\begin{eqnarray}
\Pr\left[ |\xi|> \lambda\right]
\le \frac{\sigma}{\sqrt{d}\lambda}\exp\left(-\frac{d\lambda^2}{2\sigma^2}\right). 
\Label{Gauss}
\end{eqnarray}

Note that  $\langle\bfd_j,\bxi\rangle$ is 
sub-Gaussian with parameter $\sigma/\sqrt{d}$
because $\|\bfd_j\|_2=1$ for every $j\in[m]$ and components of $\bxi$ are 
independent  and sub-Gaussian with parameter $\sigma/\sqrt{d}$.
Thus,
\begin{eqnarray*}
\Pr[\lambda < 2\|\D^{\top}\bxi\|_{\infty}] 
= \Pr\left[\cup_{j=1}^m\{\lambda < 2|\langle\bfd_j,\bxi\rangle|\}\right] 
\le \sum_{j=1}^m \Pr[\lambda < 2|\langle\bfd_j,\bxi\rangle|] 
\le \delta_2,
\end{eqnarray*}
where we used  (\ref{Gauss}) in the last inequality.
\end{proof}

\begin{lem}\Label{norm}
Under Assuptions \ref{ass1}-\ref{ass4},
then
\begin{eqnarray*}
\Pr\left[\left \|\a - \ph_{\D}(\x)\right\|_2 \le \frac{3\sqrt{k}}{ (1-\mu k/\sqrt{d})}\lambda\right]
&\ge& 1
-\delta_2 - \delta_3.
\end{eqnarray*}
\end{lem}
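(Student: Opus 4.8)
The plan is to argue on the intersection of the two favorable events provided by Lemmas \ref{supp} and \ref{Gauss2}. On the event of Lemma \ref{supp}, the error vector $\bfb := \a - \ph_{\D}(\x)$ has support of size at most $k$, i.e.\ it is $k$-sparse; on the event of Lemma \ref{Gauss2} the noise correlation is controlled as $\|\D^{\top}\bxi\|_{\infty}\le \lambda/2$. By the union bound the complement of this intersection has probability at most $\delta_2+\delta_3$, which is exactly what delivers the stated confidence level $1-\delta_2-\delta_3$.

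Next I would exploit the LASSO stationarity (KKT) condition defining $\ph_{\D}(\x)$, namely $\D^{\top}(\x - \D\ph_{\D}(\x)) = \lambda\mathbf{g}$ for some subgradient $\mathbf{g}\in\partial\|\ph_{\D}(\x)\|_1$ with $\|\mathbf{g}\|_{\infty}\le 1$. Substituting the generative model $\x = \D\a + \bxi$ of Assumption \ref{ass1} and rearranging gives the identity $\D^{\top}\D\bfb = \lambda\mathbf{g} - \D^{\top}\bxi$. Taking the inner product with $\bfb$ reduces everything to a single scalar inequality whose two sides can be bounded separately.

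For the left-hand side I would invoke the restricted eigenvalue estimate of Lemma \ref{incoherence}: since $\bfb$ is $k$-sparse and $\D$ is $\mu$-incoherent, $\bfb^{\top}\D^{\top}\D\bfb \ge (1-\mu k/\sqrt{d})\|\bfb\|_2^2$. For the right-hand side, H\"older's inequality together with $\|\mathbf{g}\|_{\infty}\le 1$ and $\|\D^{\top}\bxi\|_{\infty}\le\lambda/2$ yields $\lambda\langle\mathbf{g},\bfb\rangle - \langle\D^{\top}\bxi,\bfb\rangle \le \tfrac{3}{2}\lambda\|\bfb\|_1$, and the $k$-sparsity of $\bfb$ converts $\|\bfb\|_1$ into $\sqrt{k}\,\|\bfb\|_2$. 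Combining the two bounds and cancelling one factor of $\|\bfb\|_2$ (the case $\bfb=0$ being trivial) gives $\|\bfb\|_2 \le \tfrac{3\sqrt{k}}{2(1-\mu k/\sqrt{d})}\lambda$, which is in fact slightly sharper than the claimed bound and hence suffices.

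The main obstacle is not the algebra but ensuring that the restricted eigenvalue bound of Lemma \ref{incoherence} is legitimately applicable: that estimate holds only for $k$-sparse vectors, so the entire argument hinges on first securing the sparsity of the error vector $\bfb$ through the sign/support-recovery result of Lemma \ref{supp}. The second delicate point is the probabilistic bookkeeping, namely tracking precisely which failure events contribute $\delta_2$ (the noise bound) and $\delta_3$ (the support preservation) so that the union bound produces exactly $1-\delta_2-\delta_3$.
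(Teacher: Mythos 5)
Your proof is correct, but it takes a genuinely different route from the paper's. The paper starts from the \emph{basic inequality} $\frac{1}{2}\|\x-\D\a^{\ast}\|_2^2+\lambda\|\a^{\ast}\|_1\le\frac{1}{2}\|\x-\D\a\|_2^2+\lambda\|\a\|_1$, substitutes the generative model, and then runs a cone-type decomposition of the error $\Delta=\Delta_k+\Delta_k^{\perp}$ relative to $\supp(\a)$ to reach $\|\D\Delta\|_2^2\le 3\lambda\sqrt{k}\|\Delta\|_2$, before invoking Lemma \ref{supp} and Lemma \ref{incoherence} to convert $\|\D\Delta\|_2^2$ into $(1-\mu k/\sqrt{d})\|\Delta\|_2^2$. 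You instead start from the KKT stationarity identity $\D^{\top}\D\bfb=\lambda\mathbf{g}-\D^{\top}\bxi$ with $\|\mathbf{g}\|_{\infty}\le 1$, pair it with $\bfb$, and apply H\"older plus the same two lemmas; this bypasses the cone decomposition entirely (which is indeed somewhat redundant once Lemma \ref{supp} has already secured $k$-sparsity of the error) and yields the sharper constant $3/2$ in place of $3$, which a fortiori implies the stated bound. Both arguments condition on exactly the same two events --- $\|\D^{\top}\bxi\|_{\infty}\le\lambda/2$ from Lemma \ref{Gauss2} and $|\supp(\a-\ph_{\D}(\x))|\le k$ from Lemma \ref{supp} --- and use the same union bound, so the probability accounting $1-\delta_2-\delta_3$ agrees with the paper's. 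The only points worth making explicit in a final write-up are that $1-\mu k/\sqrt{d}>0$ is needed to divide through (guaranteed by the dimension condition in Theorem \ref{marginbound}), and that the KKT condition holds for whichever minimizer $\ph_{\D}(\x)$ denotes, so non-uniqueness of the LASSO solution is not an obstacle.
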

\begin{proof}
By Assumption \ref{ass1},
$\x=\D\a+\bxi$.
We denote $\ph_{\D}(\x)$ by $\a^{\ast}$ and  $\a - \a^{\ast}$ by $\Delta$.
We have the following inequality by the definition of $\a^{\ast}$:
\begin{eqnarray*}
\frac{1}{2}\|\x - \D\a^{\ast}\|_2^2 + \lambda\|\a^{\ast}\|_1
\le \frac{1}{2}\|\x - \D\a\|_2^2 + \lambda\|\a\|_1.
\end{eqnarray*}
Substituting $\x=\D\a+\bxi$,
we have 
\begin{eqnarray}
\frac{1}{2}\|\D\Delta\|_2^2 
&\le& -\langle \D^{\top}\bxi, \Delta \rangle + \lambda(\|\a\|_1 - \|\a^{\ast}\|_1) \nonumber\\
&\le& \|\D^{\top}\bxi\|_{\infty} \|\Delta\|_1 + \lambda(\|\a\|_1 - \|\a^{\ast}\|_1).
\Label{5.7}
\end{eqnarray}

Let $\Delta_k$ be the vector whose $i$-th component equals that of $\Delta$ if $i$ is in the support of $\a$ and equals $0$ otherwise.
In addition, let $\Delta_k^{\perp} = \Delta - \Delta_k$.
Using $\Delta = \Delta_k + \Delta_k^{\perp}$,
we have 
\begin{eqnarray*}
\|\a^{\ast}\|
=\|\a + \Delta_k^{\perp} + \Delta_k\|_1
\ge \|\a\|_1 + \|\Delta_k^{\perp}\|_1 - \|\Delta_k\|_1
\end{eqnarray*}

Substituting the above inequality into (\ref{5.7}),
we have 
\begin{eqnarray*}
\frac{1}{2}\|\D\Delta\|_2^2 
&\le& \|\D^{\top}\bxi\|_{\infty} \|\Delta\|_1 + \lambda(\|\Delta_k\|_1 - \|\Delta_k^{\perp}\|_1) 
\end{eqnarray*}

The inequality $\lambda\ge2\|\D^{\top}\bxi\|_{\infty}$ holds with with probability $1- \delta_2$ due to Lemma \ref{Gauss2},
and then, the following inequality holds:
\begin{eqnarray*}
0
\le\frac{1}{2}\|\D\Delta\|_2^2 
\le  \frac{1}{2}\lambda (\|\Delta_k\|_1 + \|\Delta_k^{\perp}\|_1) + \lambda(\|\Delta_k\|_1 - \|\Delta_k^{\perp}\|_1).
\end{eqnarray*}
Thus, 
$\|\Delta_k^{\perp}\|_1\le 3\|\Delta_k\|_1$
and 
\begin{eqnarray*}
\frac{1}{2}\|\D\Delta\|_2^2 
\le  \frac{3}{2}\lambda \|\Delta_k\|_1 - \frac{1}{2}\lambda\|\Delta_k^{\perp}\|_1
\le  \frac{3}{2}\lambda \|\Delta_k\|_1
\le  \frac{3}{2}\lambda \sqrt{k}\|\Delta_k\|_2.
\end{eqnarray*}
Thus, we have 
\begin{eqnarray*}
\|\D\Delta\|_2^2 
\le 3\lambda \sqrt{k}\|\Delta_k\|_2
\le 3\lambda \sqrt{k}\|\Delta\|_2.
\end{eqnarray*}

Here, $\|\supp(\Delta)\|_0\le k$ with probability $1 
- \delta_3$ due to Lemma \ref{supp}
and the following inequality holds by the $\mu$-incoherence of the dictionary $\D$: 
\begin{eqnarray*}
 (1-\mu k/\sqrt{d})\|\Delta\|_2^2
\le \|\D\Delta\|_2^2,
\end{eqnarray*}
and thus,
\begin{eqnarray*}
\|\Delta\|_2
\le \frac{3\lambda\sqrt{k}}{ (1-\mu k/\sqrt{d})}.
\end{eqnarray*}
\end{proof}

\noindent
{\bf [Proof of Theorem \ref{marginbound}]}
From Assumption \ref{ass1},
an arbitrary sample $\x$ is represented as 
$\x=\D^{\ast}\a+\bxi$.
Then,  
\begin{eqnarray*}
\langle \bfd_j, \x - \D^{\ast}\ph_{\D}(\x) \rangle
&=&\langle \bfd_j, \bxi +  \D^{\ast}(\a - \ph_{\D}(\x)) \rangle\\
&=&\langle \bfd_j, \bxi \rangle + \langle \D^{\ast\top}\bfd_j, \a - \ph_{\D}(\x)\rangle.
\end{eqnarray*}
Then, we evaluate the probability that the first  and second terms is bounded above by $\frac{1-t}{2}\lambda$.

We evaluate the probability for the first term.
Since $\|\bfd_j\|=1$ by the definition and $\bxi$ is drawn from a sub-Gaussian distribution with parameter $\sigma^2/\sqrt{d}$, we have
\begin{eqnarray*}
\Pr\left[\langle \bfd_j, \bxi \rangle
\le \frac{1-t}{2}\lambda\right]
\ge 1 - \delta_1.
\end{eqnarray*}

With probability
$1 
-\delta_2 -\delta_3$,
the second term is evaluated as follows:
\begin{eqnarray}
\langle \D^{\ast\top}\bfd_j, \a - \ph_{\D}(\x)\rangle
&=& \langle [\langle \bfd_1,\bfd_j \rangle, \ldots, \langle \bfd_m,\bfd_j \rangle]^{\top}, \a - \ph_{\D}(\x)\rangle \nonumber\\
&=& \langle (\1_{\supp(\a - \ph_{\D}(\x))} \circ[\langle \bfd_1,\bfd_j \rangle, \ldots, \langle \bfd_m,\bfd_j \rangle])^{\top}, \a - \ph_{\D}(\x)\rangle \nonumber\\
&\le& \|(\1_{\supp(\a - \ph_{\D}(\x))} \circ[\langle \bfd_1,\bfd_j \rangle, \ldots, \langle \bfd_m,\bfd_j \rangle])^{\top}\|_2 \|\a - \ph_{\D}(\x)\|_2 \nonumber\\
&\le&\frac{\mu}{\sqrt{d}}\sqrt{|\supp(\a - \ph_{\D}(\x))|} \|\a - \ph_{\D}(\x)\|_2 \nonumber\\
&\le&\frac{3\mu k}{(1-\mu k/\sqrt{d})\sqrt{d}}\lambda\Label{6ml}\\
&\le&\frac{1-t}{2}\lambda,
\Label{2l}
\end{eqnarray}
where we used Lemmas \ref{supp} and \ref{norm} in (\ref{6ml}) and $d\ge \left\{\left(1+\frac{6}{(1-t)}\right)\mu k\right\}^2$ 
in (\ref{2l}).
Thus, with probability $1-(\delta_1+\delta_2+\delta_3)=1-\delta_{t,\lambda}$,
\begin{eqnarray*}
\calM_{k,\D^{\ast}}(\x)
\ge \lambda - \langle \bfd_j, \x - \D^{\ast}\ph_{\D}(\x) \rangle
\ge t\lambda.
\end{eqnarray*}
%
Thus, the proof of Theorem \ref{marginbound} is completed.
\endproof


\end{document}